\documentclass{article}
\usepackage[margin=1.25in]{geometry}
\usepackage[utf8]{inputenc} 
\usepackage[T1]{fontenc}    
\usepackage{hyperref}      
\usepackage{url}           
\usepackage{booktabs}      
\usepackage{nicefrac}       
\usepackage{microtype}      
\usepackage{algorithm}
\usepackage[noend]{algorithmic} 
\usepackage{times}
\usepackage{latexsym}
\usepackage{graphicx}
\usepackage{subcaption}
\usepackage[english]{babel} 
\usepackage{amsmath,amsfonts,amsthm} 
\usepackage{amssymb}
\usepackage{enumitem}
\usepackage{xcolor}
\usepackage{thmtools}
\usepackage{nameref,hyperref,cleveref,}
\usepackage{framed}
\usepackage{multirow}
\usepackage{pbox}
\usepackage{pgfplots}
\usepackage{fullpage}
\usepackage{authblk}
\input{definitions}
\newtheorem{theorem}{Theorem}

\newtheorem*{theorem*}{Theorem}

\declaretheorem[name=Corollary, sibling=theorem, refname={corollary,corollaries}, Refname={Corollary,Corollaries}]{corollary}
\declaretheorem[name=Lemma, sibling=theorem, refname={lemma,lemmas}, Refname={Lemma,Lemmas}]{lemma}

\declaretheorem[name=Proposition, sibling=theorem, refname={proposition,propositions}, Refname={Proposition,Propositions}]{proposition}
\declaretheorem[name=Definition, sibling=theorem, refname={definition,definitions}, Refname={Definition,Definitions}]{definition}

\title{Iterative Least Trimmed Squares for Mixed Linear Regression}
\author[1]{Yanyao Shen}
\author[1]{Sujay Sanghavi}
\affil[1]{Department of ECE,  The University of Texas at Austin}
\date{}
\begin{document}

\maketitle

\begin{abstract}%
Given a linear regression setting, Iterative Least Trimmed Squares (\algname) involves alternating between (a) selecting the subset of samples with lowest current loss, and (b) re-fitting the linear model only on that subset. Both steps are very fast and simple. In this paper we analyze \algname~  in the setting of mixed linear regression with corruptions (MLR-C). 
We first establish deterministic conditions (on the features etc.) under which the \algname~ iterate converges linearly to the closest mixture component. 
We also evaluate it for the widely studied setting of isotropic Gaussian features, and establish that we match or better existing results in terms of sample complexity. 
We then provide a global algorithm that uses \algname~  as a subroutine, to fully solve mixed linear regressions with corruptions. 
Finally, we provide an ODE analysis for a gradient-descent variant of \algname~ that has optimal time complexity. 
Our results provide initial theoretical evidence that iteratively fitting to the best subset of samples -- a potentially widely applicable idea -- can provably provide state-of-the-art performance in bad training data settings.
\end{abstract}

\section{Introduction}

In vanilla linear regression, one (implicitly) assumes that each sample is a linear measurement of a single unknown vector, which needs to be recovered from these measurements. Statistically, it is typically studied in the setting where the samples come from such a ground truth unknown vector, and we are interested in the (computational/statistical complexity of) recovery of this ground truth vector. {\bf Mixed linear regression} (MLR for brevity) is the problem where there are {\em multiple} unknown vectors, and each sample can come from any one of them (and we do not know which one, a-priori). Our objective is again to recover all (or some, or one) of them from the samples. In this paper, we consider MLR with the additional presence of {\em corruptions} -- i.e. adversarial additive errors in the responses -- for some unknown subset of the samples. There is now a healthy and quickly growing body of work on algorithms, and corresponding theoretical guarantees, for MLR with and without additive noise and corruptions; we review these in detail in the related work section. 

In our paper we start from  a classical (but hard to compute) approach from robust statistics: {\bf least trimmed squares}~\cite{rousseeuw1984least}. This advocates fitting a model so as to minimize the loss on only a fraction $\tau$ of the samples, instead of all of them -- but crucially, the subset $S$ of samples chosen and the model to fit them are to be estimated {\em jointly}. To be more specific, suppose our samples are $(x_i, y_i)$, for $i\in [n]$. Then the least squares (LS) and  least trimmed squares (LTS) estimates are:
\begin{eqnarray*}
\widehat{\theta}_{\mathtt{LS}} & = & \arg \, \min_\theta ~ \sum_{i\in [n]} ~ \left ( y_i -  \langle x_i, \theta \rangle   \right ) ^2, \\
\widehat{\theta}_{\mathtt{LTS}} & = & \arg \, \min_\theta ~ \min_{S \, : \,  |S| = \lfloor \tau n\rfloor } ~ \sum_{i\in S} ~ \left ( y_i - \langle x_i, \theta \rangle \right ) ^2.
\end{eqnarray*}
Note that least trimmed  squares involves a parameter: the fraction $\tau$ of samples we want to fit. Solving for the  least trimmed squares estimate $\widehat{\theta}_{\mathtt{LTS}}$ needs to address the combinatorial issue of finding the best subset to fit, but the goodness of a subset is only known once it is fit. LTS is shown to have computation lower bound exponential in the dimension of $x$~\cite{mount2014least}.

LTS, if one could solve it, would be a candidate algorithm for MLR as follows: suppose we knew a lower bound on the number of samples corresponding to a single component (i.e. generated using one of the unknown vectors). 
Then one would choose the fraction $\tau$ in the LTS procedure to be {\em smaller} than this lower bound on the fraction of samples that belong to a component. 
Ideally, this would lead the LTS to choose a subset $S$ of samples that all correspond to a single component, and the least squares on that set $S$ would find the corresponding unknown vector. This is easiest to see in the noiseless corruption-less setting where each sample is just a pure linear equation in the corresponding unknown vector. In this case, an $S$ containing samples only from one component, and a $\theta$ which is the corresponding ground truth vector, would give 0 error and hence would be the best solutions to LTS. Hence, to summarize, one can use LTS to solve MLR by estimating {\bf a single ground truth vector at a time}. 

However, LTS is intractable, and we instead study the natural iterative variant of LTS, which alternates between finding the set $S\subset n$ of samples to be fit, and the $\theta$ that fits it. In particular, our procedure -- which we call {\bf iterative least trimmed squares (\algname)} -- first picks a fraction $\tau$ and then proceeds in iterations (denoted by $t$) as follows: starting from an initial $\theta_0$,
\begin{eqnarray*}
S_t & = & \arg \, \min_{S \, : \,  |S| = \lfloor \tau n\rfloor } ~  \sum_{i\in S} ~ \left ( y_i - \langle x_i, \theta_t \rangle \right ) ^2, \\
\theta_{t+1} & = & \arg \, \min_\theta ~ \sum_{i\in S_t} ~ \left ( y_i - \langle x_i, \theta \rangle \right ) ^2. 
\end{eqnarray*}
Note that now, as opposed to before, finding the subset $S_t$ is trivial: just sort the samples by their current squared errors $( y_i - \langle x_i, \theta_t \rangle ) ^2$, and pick the $\tau n$ that have smallest loss. Similarly, the $\theta$ update now is a simple least squared problem on a pre-selected subset of samples. Note also that each of the above steps decreases the function $a(\theta,S) \triangleq \sum_{i\in S} ~ \left ( y_i - \langle x_i, \theta \rangle \right ) ^2 $. This has also been referred to as iterative hard thresholding and studied for the different but related problem of robust regression, again please see related work for known results. Our motivations for studying ILTS are several: {\em (1)} it is very simple and natural, and easy to implement in much more general scenarios beyond least squares. Linear regression represents in some sense the simplest statistical setting to understand this approach. {\em (2)} In spite of its simplicity, we show in the following that it manages to get state-of-the-art performance for MLR with corruptions, with weaker assumptions than several existing results.

Again as before, one can use ILTS for MLR by choosing a $\tau$ that is smaller than the number of samples in a component. However, additionally, we now also need to choose an initial $\theta_0$ that is closer to one component than the others. In the following, we thus give two kinds of theoretical guarantees on its performance: a local one that shows linear convergence to the closest ground truth vector, and a global one that adds a step for good initialization.

\textbf{Main contributions and outline:}
\begin{itemize}[leftmargin=*]
	\item We propose a simple and efficient algorithm \algname~ for solving MLR with adversarial corruptions; we precisely describe the problem setting in \textbf{ Section \ref{sec:formulation}}. \algname~ starts with an initial estimate of a single unknown $\theta$, and alternates between selecting the size $\tau n$  subset of the samples best explained by  $\theta$, and updating the $\theta$ to best fit this set. Each of these steps is very fast and easy. 
	
	\item Our first result, \textbf{  Theorem \ref{thm:local-1}} in  \textbf{ Section \ref{sec:results}} establishes deterministic conditions -- on the features, the initialization, and the numbers of samples in each component -- under which \algname~ linearly converges to the ground truth vector that is closest to the initialization.  \textbf{ Theorem \ref{thm:local}} in  \textbf{ Section \ref{sec:results}} specializes this to the (widely studied) case when the features are isotropic Gaussians.  
	The sample complexity is nearly optimal in both dimension $d$ and the number of components $m$, while previous state-of-the-art results are nearly optimal in $d$, but can be exponential in $m$. 
	Our analysis for inputs following isotropic Gaussian distribution is easy to generalize to more general class of sub-Gaussian distributions. 
	
	\item To solve the full MLR problem, we identify \textit{finding the subspace spanned by the true MLR components} as a core problem for initialization. In the case of isotropic Gaussian features, this is known to be possible by existing results in robust PCA (when corruptions exist) or standard spectral methods (when there are no corruptions). 	Given a good approximation of this subspace, one can use the \algname~ process above as a subroutine with an ``outer loop" that tries out many initializations (which can be done in parallel, and are not too many when number of components is fixed and small) and evaluates whether the final estimate is to be accepted as an estimate for a ground truth vector (Global-\algname). We specify and analyze it in  {\bf Section \ref{sec:global}} for the case of random isotropic Gaussian features and also discuss the feasibility of finding such a subspace.

\end{itemize}


\section{Related Work}
\label{sec:related}

\textbf{Mixed  linear regression} 
Learning MLR even in the two mixture setting is NP hard in general~\cite{yi2014alternating}. 
As a result, provably efficient algorithmic solutions under natural assumptions on the data, e.g., all inputs are i.i.d. isotropic Gaussian, are studied. 
Efficient algorithms that provably find both components include the idea of using spectral initialization with local alternating  minimization~\cite{yi2014alternating}, and classical EM approach with finer analysis~\cite{balakrishnan2017statistical,klusowski2019estimating,kwon2018global}. 
In the multiple components setting, substituting spectral initialization by tensor decomposition brings  provable algorithmic solutions~\cite{chaganty2013spectral,yi2016solving,zhong2016mixed,sedghi2016provable}. 
Recently, \cite{li2018learning} proposes an algorithm with nearly optimal complexity using quite different ideas. They relate MLR problem with learning GMMs and use the black-box algorithm in~\cite{moitra2010settling}. 
In Table \ref{tab:algs}, we summarize the  sample and computation complexity of the three most related work. 
Previous literature focus on the dependency on dimension $d$, for all these algorithms that achieve near optimal sample complexity, the dependencies  on $m$ for all the algorithms are expoential (notice that the guarantees in \cite{yi2016solving} contains a  $\sigma_m$ term, which can be exponentially small in $m$ without further assumptions, as pointed out by \cite{li2018learning}), and 
\cite{li2018learning} requires exponential in $m^2$ number of samples for a more general class of Gaussian distributions. Notice that while it is reasonable to assume $m$ being a constant, this exponential dependency on $m$ or $m^2$ could dominate the sample complexity in practice. 
From robustness point of view, the analysis of all these algorithms rely heavily on exact model assumptions and are restricted to Gaussian distributions. 
While recent approaches on robust algorithms are able to deal with strongly convex functions, e.g., ~\cite{diakonikolas2018sever}, with corruption in both inputs and outputs,  \cite{zhong2016mixed} showed local strong convexity of MLR with small local region  $\tilde{\mathcal{O}}(d(md)^{-m})$, under $\tilde{\Omega}(dm^m)$ samples. 
To the best of our knowledge, we are not aware of any previous work study the algorithmic behavior under mis-specified MLR model settings. We provide fine-grained analysis for a simple algorithm that achieves nearly optimal sample and computation complexity. 

\begin{table}
	\centering 
	\footnotesize
	\begin{tabular}{lcccccc}
		\toprule 
		& setting  & & sample ($n$)  & computation  & \\
		\midrule 
		\multirow{2}{*}{\cite{yi2016solving}} &  {$\mathcal{N}(0, \Ib_d)$, $\sigma_k$}  & local & $\mathtt{poly}(m)d$ & $nd^2+md^3$ \\
		& linearly independent $\theta_{(j)}$s  & global & ${\mathtt{poly}(m) d}/{\sigma_m^5}$ &$nd^2 + \mathtt{poly}(m)$\\
		\midrule
		\cite{zhong2016mixed} & $\mathcal{N}(0, \Ib_d)$,  constant $Q$ & & $m^md$ & $ nd $\\
		\midrule
		\cite{li2018learning} & $\mathcal{N}(0, \Sigma_{(j)})$  & & $d\mathtt{poly}(\frac{m}{Q}) + (\frac{cm}{Q})^{m^{2}}$ & $nd$ \\
		\midrule
		\multirow{ 2}{*}{Ours} & robust, not limited to $\mathcal{N}(0, \Sigma_{(j)})$    & local & {$md$} & {$nd^2$ ($nd $ for GD-\algname)} \\
		& good estimate of the subspace   & global & - & subspace est. $ + (\frac{cm}{Q})^m \cdot nd $ \\ 
		\bottomrule 
	\end{tabular}
	\caption{Compare with previous results in the setting of balanced MLR, i.e., each component has $n/m$ samples. 
	$Q$ represents a separation property of the mixture components (see Definition \ref{def:separation} for details). 
	For conciseness, we only keep the main factors in the complexity terms. 
	The inspiring algorithms listed here achieve nearly optimal sample complexity (nearly linear in $d$) under certain settings,  which are helpful for understanding the limit of learning MLR. 
		Note that we have $\tilde{\mathcal{O}}(nd^2 \log\frac{1}{\varepsilon} )$ computation for \algname~ and $\tilde{\mathcal{O}}(nd\log^2 \frac{1}{\varepsilon})$ for GD-\algname~ (in Section \ref{sec:gradient}, a direct gradient variant). 
		Sample complexity for our global step depends on the hardness of finding the subspace. 
		Our local requirement only needs the current estimation to be close to one of the components, which is much easier to satisfy than the local notion in \cite{yi2016solving}. 
		Methods in \cite{chaganty2013spectral,sedghi2016provable} require $\tilde{\Omega}(d^6)$ and $\tilde{\Omega}(d^3)$ sample complexity (they can handle more general settings),  \cite{yin2018learning} uses sparse graph codes for sparse MLR. Therefore, we do not list their results here (hard to compare with). 
	}
	\label{tab:algs}
\end{table}

\textbf{Robust regression} 
Our algorithm idea is similar to  least trimmed square estimator (LTS) proposed by \cite{rousseeuw1984least}. 
The hardness of finding the exact LTS estimator is discussed in \cite{mount2014least}, which shows an exponential in $d$ computation lower bound under the hardness of affine degeneracy conjecture. 
While our algorithm is similar to the previous hard thresholding solutions proposed  in \cite{bhatia2015robust}, their analysis does not handle the MLR setting, and only guarantees  parameter recovery given a small constant fraction of  corruption. 
Algorithmic solutions based on LTS for solving more general problems have been proposed in \cite{yang2018general,vainsencher2017ignoring,shen2019iteratively}. 
\cite{karmalkar2018compressed} studies the $l_1$ regression and gives a tight analysis for recoverable corruption ratio. 
Another line of research focus on robust regression where both the inputs and outputs can be corrupted, e.g., \cite{chen2013robust}. 
There are provable recovery guarantees under constant ratio of corruption using 
using robust gradient methods~\cite{diakonikolas2018sever,prasad2018robust,liu2018high}, and sum of squares method~\cite{klivans2018efficient}. We focus on computationally efficient method with  nearly optimal computation time that is easily scalable in practice.

\section{Problem Setup and Preliminaries}
\label{sec:formulation}

We consider the standard (noiseless) MLR model with corruptions, which we will abbreviate to \modelb; each sample is a linear measurement of one of $m$ unknown ``ground truth" vectors -- but we do not know which one. Our task is to find the ground truth vectors, and this is made harder by a constant fraction of all samples having an additional error in responses. We now specify this formally.

\modelb: We are given $n$ samples of the form $(x_i,y_i)$ for $i = 1,\ldots, n$, where each $y_i\in \mathbb{R}$ and $x_i \in \mathbb{R}^d$. Unkown to us, there are $m$ ``ground truth" vectors $\theta^\star_{(1)},\ldots,\theta^\star_{(m)}$, each in $\mathbb{R}^d$; correspondingly, and again unknown to us, the set of samples is partitioned into disjoint sets $S_{(1)},\ldots, S_{(m)}$. If the $i^{th}$ sample is in set $S_{(j)}$ for some $j\in [m]$, it satisfies
\begin{align*}
  y_i =   \langle x_i, \theta_{(j)}^\star \rangle  + r_i, &~~~~\text{ for $i\in S_{(j)}$} ~~~~ \mbox{\modelb}.
\end{align*}
Here, $r_i$ denotes the possible additive corruption -- a fraction of the $r_1,\ldots,r_n$ are arbitrary unknown values, and the remaining are 0 (and again, we are not told which).
 
Our {\bf objective} is: given only the samples $(x_i,y_i)$, find the ground truth vectors $\theta^\star_{(1)},\ldots,\theta^\star_{(m)}$. In particular, we do not have a-priori knowledge of any of the sets $S_{(j)}$, or the values/support of the corruptions. We now develop some notation for the sizes of the components etc.

{\bf Sizes of sets:} Let $R^\star = \{ i \in [n] ~~ \text{s.t.} ~~  r_i \neq 0 \}$ denote the set of corrupted samples; note that this set can overlap with any / all of the components' sets $S_{(j)}$s. Let $S_{(j)}^\star = S_{(j)}\backslash R^\star$  be the uncorrupted set of samples from the $S_{(j)}$, for all $j\in [m]$.  Let $\tau^\star_{(j)} =  |S^\star_{(j)}|/n$ denote the fraction of uncorrupted samples in each component $j$, and  $\tau_{\min}^\star = \min_{j\in [m]} \tau_{(j)}^\star$ denote the smallest such fraction. 
Let $\gamma^\star = |R^\star |/ (n \tau_{\min}^\star)$ be the ratio of the number of corrupted samples to the size of the smallest component~\footnote{the component with fewest samples}.  
Notice that  $\gamma^\star=0$ corresponds to the MLR model without corruption. We do not make any assumptions on which specific samples are corrupted; $R^\star$ can be any subset of size $\gamma^\star \tau_{\min}^\star n$ of the set of $n$ samples. Thus a $\gamma^\star = 1$ situation can prevent the recovery of the smallest component. 

Finally, for convenience,  we denote $S_{(-j)}^\star := \cup_{l\in [m]\backslash \{j\}} S_{(l)}^\star$, $\forall j\in [m]$,  $\Xb = \left[x_1,\cdots, x_n \right]^\top \in \mathbb{R}^{n\times d}$, and $y = [y_1,\cdots y_n]$.  
Note that we consider the case without additive stochastic noise, which is the same setting as in~\cite{yi2016solving,zhong2016mixed,li2018learning}.

\subsection{Preliminaries}

We now develop our way to making a few basic assumptions on the model setting; our main results show that under these assumptions the simple \algname~ algorithm succeeds. The first definition quantifies  the separation between the ground truth vectors.

\begin{definition}[$Q$-separation]
	For the set of components $\{ \theta_{(1)}^\star,\cdots, \theta_{(m)}^\star \}$, \\
	(i) the set of components is $Q$-separated  if 
	$
	Q \le 
		\frac{\min_{i,j\in [m], i\neq j} \|\theta_{(i)}^\star - \theta_{(j)}^\star \|_2}{\max_{j\in [m]} \|\theta_{(j)}^\star \| }; 
	$ \\
	(ii) local separation $Q_j$ is defined as 
	$
		Q_j = \frac{\min_{l\in [m]\backslash\{j\} } \|\theta_{(l)}^\star - \theta_{(j)}^\star \|_2}{ \|\theta_{(j)}^\star \| },$  $ \forall j\in [m].
	$ 
	
	\label{def:separation}
\end{definition}
By definition, it is clear that $ Q\le Q_j$, $\forall j\in[m]$. 
In fact, $Q$ represents the global separation property, which is required by previous literatures for solving MLR \cite{yi2016solving,zhong2016mixed,li2018learning}, 
while $Q_j$ describes the local separation property for the $j^{th}$ component, and gives us a better characterization of the local convergence property for a single component. %
We now turn to the features; let $\Xb$ denote the $n \times d$ matrix of features, with the $i^{th}$ row being $x_i^\top$ -- the features  of the $i^{th}$ sample. 
\begin{definition}[$(\psi^{+}, \psi^{-})$-feature regularity]\label{def:feature}
	Define $\mathcal{S}_k$ to be the set of all subsets in $[n]$ with size $k$, and let $\Xb_S$ be the sub-matrix of $\Xb$ with rows indexed by some $S\subset [n]$. 
	Define
	\begin{align}
		\psi^{+}(k) = & \max_{S\in \mathcal{S}_k} \sigma_{\max}(\Xb_S^\top  \Xb_S), ~~~~ \mbox{and} ~~~~
		\psi^{-}(k) =  \min_{S\in \mathcal{S}_k} \sigma_{\min}(\Xb_S^\top  \Xb_S), \label{eqt:psi-1}
	\end{align}
	where functions $\psi^{+}(k), \psi^{-}(k)$ are  feature regularity upper bound and lower bound, respectively. $\sigma_{\max}(\Ab) (\sigma_{\min}(\Ab))$ represents the largest (smallest) eigenvalue of a symmetric matrix $\Ab$.  
\end{definition}
Clearly, if $\psi^{+}$ is too large or $\psi^{-}$ is too small,  identifying samples belonging to a certain component or not, even given a very good estimate of the true component, can become extremely difficult. For example, if the true component coincides with the top eigenvalue direction of its feature covariance matrix, then, even if the current estimate is close within $\ell_2$, the prediction error can still be quite large due to the $\Xb$. 
On the other hand, if each row in $\Xb$ follows i.i.d. isotropic Gaussian distribution, $\psi^{+}(k)$ and $\psi^{-}(k)$ are upper and lower bounded by $\Theta(n)$ for $k$ being a constant factor of $n$ (when $n$ is large enough). This is shown in Lemma \ref{lem:support-1}. 
Next, we define $\Delta$-affine error, a property of the data that is closely connected with our analysis of \algname~ in \Cref{sec:results}. 
\begin{definition}[$\Delta$-affine error  /$\Vcal(\Delta)$]\label{def:affine}
	For $\forall j\in [m]$, denote $\Xb_{(j)}$ as the sub-matrix with rows from $S_{(j)}^\star$ with size $n \tau_{(j)}^\star$, $\Xb_{(-j)}$ as the sub-matrix with rows from $S_{(-j)}^\star$ with size $\tau_{(-j)}^\star$, let $\tau_{(j)} = c_{\tau}\tau_{(j)}^\star$ for some fixed constant  $c_{\tau}<1$. Define \textbf{$\Delta$-affine error} $\mathcal{V}(\Delta)$ to be the maximum value of integer $V$ such that the following holds for some $v_1, v_2\in\mathbb{R}^d$  with $\|v_1\|_2/\|v_2\|_2 = \Delta \le 1$ and $j\in [m]$:
	\begin{align}\label{eqt:def3}
	[|\Xb_{(j)}v_1|]_{(V+\lceil(\tau_j^\star - \tau_j)n \rceil )^{\mathtt{th}}\mbox{ largest} } \ge [|\Xb_{(-j)}v_2|]_{(V)^{\mathtt{th}}\mbox{ smallest}}.
	\end{align}
\end{definition}
This is saying, when we pick samples from set $S_{(j)}^\star$ and $S_{(-j)}^\star$ by ranking  and finding the smallest $\lfloor \tau_j n \rfloor$ samples based on the projected values to $v_1, v_2$, the number of samples from $S_{(-j)}^\star$ is at most $\mathcal{V}(\Delta)$. 
For example, given current estimate $\theta$, the residual of a sample from component $j$ is $ \langle x_i, \theta_{(j)}^\star - \theta\rangle $, and $v_1$ can be considered as $\theta_{(j)}^\star - \theta$. 
As a result, this definition helps quantify the number of mis-classified samples from other components, see \Cref{fig:illustration} for another  illustration.  
If each row in $\Xb$ follows i.i.d. isotropic Gaussian distribution, $\mathcal{V}(\Delta)$ scales linearly with $\Delta$ for large enough $n$. This is shown in Lemma \ref{lem:help}. 

\begin{figure}[t]
    \centering
    \includegraphics[width=\linewidth]{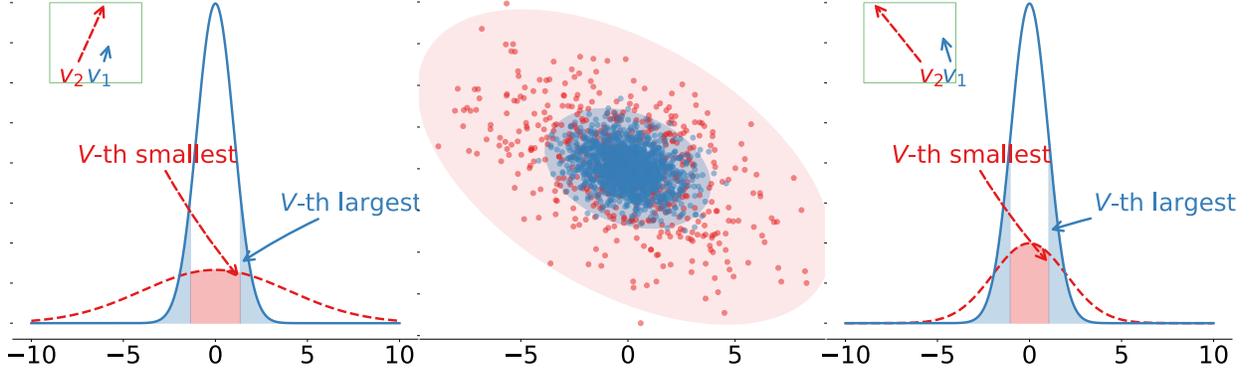}
    \caption{A two-dimensional illustration of $\Delta$-affine error $\Vcal(\Delta)$ in \Cref{def:affine} for $\|v_1\|_2/\|v_2\|_2=\Delta$ (for simplicity, assume $\tau_j^\star = \tau_j$).   $\Vcal(\Delta)$ can be interpreted as the number of mistakenly filtered samples in {\bf any} directions. 
    The plot in the middle contains blue dots from one component $\Xb_{(j)}$, and red dots from other components $\Xb_{(-j)}$. 
    The plots on the left and right illustrate how the histogram looks like for $\Xb_{(j)}v_1$ (in blue) and $\Xb_{(-j)}v_2$ (in red), for two sets of  $v_1$ and $v_2$. Areas in blue represent the samples that may be mistakenly filtered out. The maximum $V$ that satisfies (\ref{eqt:def3}) is larger on the right side plot since projected values for samples from $S_{(-j)}^\star$ are more concentrated. $\Vcal(\Delta)$ is an upper bound of the maximum $V$ on all possible directions.  }
    \label{fig:illustration}
\end{figure}
\section{\algname ~ and Local Analysis }
\label{sec:results}

\begin{algorithm}[t]
	\begin{algorithmic}[1]
		\STATE \textbf{Input:} Samples $\mathcal{D}_n = \{x_i, y_i\}_{i=1}^n$,  initial $\theta_0$, fraction of samples to be retained $ \tau $
		
		\STATE \textbf{Output:} Final estimation $\widehat{\theta}$ 
		
		\STATE \textbf{Parameters:} Number of rounds $T$
		
		\FOR {$t=0$ to $T-1$}
		\STATE $S_t \leftarrow $ index set of  $\lfloor \tau n \rfloor$ samples with smallest residuals $(y_i - \langle x_i, \theta_t\rangle )^2$, $i\in [n]$ 
		\STATE $\theta_{t+1}  =  \arg \, \min_\theta ~ \sum_{i\in S_t} ~ \left ( y_i - \langle x_i,  \theta \rangle \right ) ^2 $
		\ENDFOR
		\STATE \textbf{Output:} $\widehat{\theta} = \theta_T$
	\end{algorithmic}
	\caption{ \textsc{\algname} (for recovering a single component)}
	\label{alg:local}
\end{algorithm}

Algorithm \ref{alg:local} presents the procedure of \algname: Starting from initial parameter $\theta_0$, the algorithm alternates between (a) selecting samples with smallest residuals, and (b) getting the  least square solution on the selected set of samples as the new parameter. Intuitively, \algname~ succeeds if (a) $\theta_0$ is close to the targeted component, and (b) for each round of update, the new parameter is getting closer to the targeted component.  
For our analysis, we assume the chosen fraction of samples to be retained is strictly less than the number of samples from the interested component, i.e., $\tau = c_0 \tau_{(j)}^\star$ for some universal constant $c_0$.  
We first provide local recovery results using the structural definition we made in Section \ref{sec:formulation}, for both no corruption setting and corruption setting. 
Then, we present the result under Gaussian design matrix. 
All proofs can be found in the Appendix.

\begin{theorem}[deterministic features]\label{thm:local-1}
	Consider  \modelb~ using Algorithm \ref{alg:local} with $\tau < \tau_{(j)}^\star$. Given iterate $\theta_t$ at round $t$, which is closer to the $j$-th component in Euclidean distance and satisfies $\|\theta_t - \theta_{(j)}^\star \|_2 \le \frac{1}{2} \min_{l\in [m]\backslash \{j\}} \|\theta_{(j)}^\star - \theta_{(l)}^\star \|_2$, then the next iterate $\theta_{t+1}$ of the algorithm satisfies 
	\begin{align}\label{eqt:thm1}
			\|\theta_{t+1} - \theta_{(j)}^\star \|_2 \le \frac{2\psi^{+}\left( \mathcal{V}\left( \frac{1}{Q_j} \cdot \frac{2\|\theta_t - \theta_{(j)}^\star \|_2}{ \|\theta_{(j)}^\star \|_2} \right) + \gamma^\star \tau_{\min}^\star n \right)}{\psi^{-}(\tau n)} \|\theta_{t} - \theta_{(j)}^\star \|_2.
	\end{align}
\end{theorem}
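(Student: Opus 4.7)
The plan is to start from the normal equations of the step-6 least-squares fit, express $\theta_{t+1}-\theta_{(j)}^\star$ in terms of the ``extra'' responses on $B_t:=S_t\setminus S_{(j)}^\star$, split those extras into a ``projection'' piece involving $\Xb_{B_t}^\top\Xb_{B_t}u_t$ and a ``small-residual'' piece involving $r_{B_t}$, and bound both so that a full power of $\psi^{+}$ (rather than $\sqrt{\psi^{+}}$) appears in the final ratio against $\psi^{-}(\tau n)$. Concretely, I would fix notation by letting $u_t := \theta_t-\theta_{(j)}^\star$, $u := \theta_{t+1}-\theta_{(j)}^\star$, $r_i := y_i-\langle x_i,\theta_t\rangle$, and $e_i := y_i-\langle x_i,\theta_{(j)}^\star\rangle$, and partition $S_t=G_t\sqcup B_t$ with $G_t=S_t\cap S_{(j)}^\star$. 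Under \modelb, $e_i=0$ on $G_t$, and the trivial algebraic identity $e_i=r_i+\langle x_i,u_t\rangle$ holds for every $i\in[n]$. The normal equations for step 6 of Algorithm~\ref{alg:local} then give
\begin{align*}
u=(\Xb_{S_t}^\top\Xb_{S_t})^{-1}\Xb_{S_t}^\top e_{S_t}=(\Xb_{S_t}^\top\Xb_{S_t})^{-1}\Xb_{B_t}^\top e_{B_t},
\end{align*}
and Definition~\ref{def:feature} immediately yields $\|u\|_2\le\psi^{-}(\tau n)^{-1}\|\Xb_{B_t}^\top e_{B_t}\|_2$.

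Next I would bound $|B_t|$. Split $B_t=(S_t\cap S_{(-j)}^\star)\cup(S_t\cap R^\star)$; the corrupted piece has size at most $|R^\star|=\gamma^\star\tau_{\min}^\star n$ trivially. For the wrong-component piece, the hypothesis $\|u_t\|_2\le\tfrac12\min_{l\ne j}\|\theta_{(j)}^\star-\theta_{(l)}^\star\|_2$ combined with the triangle inequality and the definition of $Q_j$ gives $\|\theta_{(l)}^\star-\theta_t\|_2\ge\tfrac12\|\theta_{(l)}^\star-\theta_{(j)}^\star\|_2\ge\tfrac{Q_j}{2}\|\theta_{(j)}^\star\|_2$ for every $l\ne j$, hence $\|u_t\|_2/\|\theta_{(l)}^\star-\theta_t\|_2\le\Delta:=\tfrac{2\|u_t\|_2}{Q_j\|\theta_{(j)}^\star\|_2}$. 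Applying Definition~\ref{def:affine} with $v_1=u_t$ and $v_2$ proportional to $\theta_{(l^\star)}^\star-\theta_t$ for the worst $l^\star$, and absorbing the at most $\gamma^\star\tau_{\min}^\star n$ slots of $S_t$ that corrupted samples can occupy into the slack count, yields $|B_t|\le\mathcal{V}(\Delta)+\gamma^\star\tau_{\min}^\star n$.

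The main step is to show $\|\Xb_{B_t}^\top e_{B_t}\|_2\le 2\psi^{+}(|B_t|)\|u_t\|_2$. Using $e_{B_t}=r_{B_t}+\Xb_{B_t}u_t$ and the triangle inequality,
\begin{align*}
\|\Xb_{B_t}^\top e_{B_t}\|_2\le\|\Xb_{B_t}^\top\Xb_{B_t}u_t\|_2+\|\Xb_{B_t}^\top r_{B_t}\|_2\le\psi^{+}(|B_t|)\|u_t\|_2+\sqrt{\psi^{+}(|B_t|)}\|r_{B_t}\|_2.
\end{align*}
To bound $\|r_{B_t}\|_2$ by $\sqrt{\psi^{+}(|B_t|)}\|u_t\|_2$, I would invoke the step-5 selection rule: each $i\in B_t$ satisfies $r_i^2\le r_g^2$ for every $g\in G_{\bar t}:=S_{(j)}^\star\setminus S_t$, and $|G_{\bar t}|=(\tau_{(j)}^\star-\tau)n+|B_t|\ge|B_t|$ because $\tau<\tau_{(j)}^\star$, so there is an injection $B_t\hookrightarrow G_{\bar t}$ with some image $G'\subset G_{\bar t}$ of size $|B_t|$; since $r_g=-\langle x_g,u_t\rangle$ on good samples, summing along the injection yields $\|r_{B_t}\|_2^2\le\|\Xb_{G'}u_t\|_2^2\le\psi^{+}(|B_t|)\|u_t\|_2^2$. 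Plugging back into the bound on $\|u\|_2$ and substituting $|B_t|\le\mathcal{V}(\Delta)+\gamma^\star\tau_{\min}^\star n$ produces \eqref{eqt:thm1}.

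The main obstacle I expect is precisely landing the ratio $\psi^{+}/\psi^{-}$ rather than the weaker $\sqrt{\psi^{+}/\psi^{-}}$ that a direct operator-norm bound $\|\Xb_{B_t}^\top e_{B_t}\|_2\le\sqrt{\psi^{+}(|B_t|)}\|e_{B_t}\|_2$ followed by $\|e_{B_t}\|_2\lesssim\sqrt{\psi^{+}(|B_t|)}\|u_t\|_2$ would produce. The decomposition $e_i=r_i+\langle x_i,u_t\rangle$ is the key trick: the projection half contributes the full $\psi^{+}(|B_t|)\|u_t\|_2$ in one shot via $\Xb_{B_t}^\top\Xb_{B_t}u_t$, while the small-residual half extracts the second $\sqrt{\psi^{+}(|B_t|)}$ from $\|r_{B_t}\|_2$ through the $G_{\bar t}$-injection, which crucially uses both $\tau<\tau_{(j)}^\star$ (so $G_{\bar t}$ is large enough) and the algebraic form $r_g=-\langle x_g,u_t\rangle$ of good-sample residuals.
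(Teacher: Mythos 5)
Your proposal is correct and reaches the paper's bound along the same overall skeleton: normal equations, reduction to the wrongly-selected set $B_t$, the split $e_i = r_i + \langle x_i, u_t\rangle$ (which is exactly the paper's $\mathcal{T}_2 + \mathcal{T}_3$ decomposition of $\theta_{(l)}^\star - \theta_{(j)}^\star$ into $(\theta_{(l)}^\star-\theta_t)+(\theta_t-\theta_{(j)}^\star)$), the matching of selected bad samples to unselected good samples with larger current residuals, and the count $|B_t|\le\mathcal{V}(\Delta)+\gamma^\star\tau_{\min}^\star n$. Where you genuinely diverge is in how the residual term is converted into $\psi^{+}(|B_t|)\|u_t\|_2$. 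The paper upgrades the element-wise domination of the bad residuals by the matched good ones (encoded via a permutation matrix $\Pb$) into a bound on $\|\Xb^\top r_{B_t}\|_2$ through two dedicated lemmas: \Cref{lem:support-3}, which shows that $|a|<|b|$ element-wise implies $a^\top\Ab a< b^\top\Nb\Ab\Nb b$ for PSD $\Ab$ and some sign matrix $\Nb$, and \Cref{lem:support-2}, which controls $\|\Xb^\top\Wb\Pb\Nb\Xb\|_2$ by the max of two $\psi^{+}$-type quantities. You instead apply Cauchy--Schwarz twice: $\|\Xb_{B_t}^\top r_{B_t}\|_2\le\sqrt{\psi^{+}(|B_t|)}\,\|r_{B_t}\|_2$, and then $\|r_{B_t}\|_2^2\le\|\Xb_{G'}u_t\|_2^2\le\psi^{+}(|B_t|)\|u_t\|_2^2$ by summing the squared-residual comparison along the injection into the unselected good set. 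This is more elementary --- it removes the sign-matrix and permuted-spectral-norm machinery entirely --- and yields the geometric mean of $\psi^{+}(|B_t|)$ and $\psi^{+}(|G'|)$ in place of the paper's max, which is never worse; both routes land on the same constant $2$. Two cosmetic caveats: your $r_i$ collides with the paper's corruption variable $r_i$, and, like the paper, you implicitly rely on monotonicity of $\psi^{+}$ and $\mathcal{V}$ and on a single worst-case direction $v_2$ in \Cref{def:affine} standing in for the several directions $\theta_{(l)}^\star-\theta_t$, $l\ne j$ --- the paper's own proof has the same level of informality there, so this is not a gap relative to it.
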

The above one-step update rule (\ref{eqt:thm1}) holds as long as Algorithm \ref{alg:local} uses $\tau < \tau_{(j)}^\star$ and the iterate $\theta_t$ is closer to the $j$-th component. 
However, in order to make $\theta_{t+1}$ getting closer to $\theta_{(j)}^\star$, the contraction term on the RHS of (\ref{eqt:thm1}) needs to be less than $1$, which may require stronger conditions on $\theta_t$, depending on what $x_i$s are. 
The denominator term $\psi^{-}(\tau n)$ is due to the selection bias on a subset of samples, which scales with $n$ as long as the inputs have good regularity property. 
The numerator term is due to the incorrect samples selected by $S_t$, which consists of: (a) samples from other mixture components, and (b) corrupted samples. (a) is controlled by the affine error, which depends on (a1) the local separation of components $Q_j$, and (a2) the relative closeness of $\theta_t$ to $\theta_{(j)}^\star$, and scales with $n$. The affine error $\mathcal{V}$ gets larger if the separation is small, or $\theta_t$ is not close enough to $\theta_{(j)}^\star$. For (b), the number of all corrupted samples is controlled by $\gamma^\star \tau_{\min}^\star n$, which is not large given $\gamma^\star$ being a small constant.

Theorem \ref{thm:local-1} gives a general update rule for any given dataset according to Definitions \ref{def:separation}-\ref{def:feature}.  
Next, we present the local convergence result for the specific setting of Gaussian input vectors, 
by giving  a tight analysis for feature regularity in \Cref{lem:support-1}  and  a tight bound for the affine error $\Vcal(\Delta)$ in \Cref{lem:help}.

\begin{lemma} \label{lem:support-1}
	Let $\psi^{+}(k), \psi^{-}(k)$ be defined as in (\ref{eqt:psi-1}), and assume each $x_i\sim \mathcal{N}(0, \Ib_d)$. Then,  for $k=c_k n$ with constant $c_k$, for $n =\Omega\left( \frac{d\log{\frac{1}{c_k}}}{c_k} \right)$, with high probability, 
	\begin{align*}
		\psi^{+}(k) \le c_{1}\cdot k, ~~~~ \psi^{-}(k) \ge c_{2} \cdot k,
	\end{align*}
	where $c_{1}, c_{2}$ are constants that depend on $c_k$: $c_{1}\le 1 + 3e\sqrt{6\log\frac{2}{c_k}} + \frac{C_1}{c_k}$, $c_{2}\ge C_{2} c_k$, for universal constants $C_1, C_2$. 
\end{lemma}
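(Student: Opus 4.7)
The plan is to handle the two bounds separately; both rely on Gaussian singular-value concentration, but at different granularities.

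\textbf{Upper bound.} I would use the monotonicity $\Xb_S^\top\Xb_S \preceq \Xb^\top\Xb$ for every $S\subseteq[n]$, which reduces $\psi^+(k)$ to $\sigma_{\max}(\Xb^\top\Xb)$ without any union bound over subsets. Classical Davidson--Szarek / Vershynin concentration then gives $\sigma_{\max}(\Xb^\top\Xb)\le(\sqrt{n}+\sqrt{d}+t)^2$ with probability $\ge 1-2e^{-t^2/2}$; expanding the square, using $n\ge d\log(1/c_k)/c_k$ to dominate the cross-term $2\sqrt{nd}$ by $k$, and taking $t$ of order $\sqrt{\log(1/c_k)}$, one recovers the stated $c_1\le 1+3e\sqrt{6\log(2/c_k)}+C_1/c_k$ after dividing through by $k=c_k n$.

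\textbf{Lower bound: reformulation.} A direct Davidson--Szarek union bound over the $\binom{n}{k}\le(e/c_k)^k$ size-$k$ subsets forces the deviation to scale as $\sqrt{k\log(1/c_k)}$ and renders $(\sqrt{k}-\sqrt{d}-t)^2$ vacuous, so the essential idea is to swap the two minimizations:
\begin{equation*}
\psi^-(k)=\min_{\|v\|_2=1}\min_{|S|=k}\sum_{i\in S}(x_i^\top v)^2=\min_{\|v\|_2=1}\sum_{i=1}^k z_{(i)}(v),
\end{equation*}
where $z_{(1)}(v)\le\cdots\le z_{(n)}(v)$ are the order statistics of $\{(x_i^\top v)^2\}_{i\in[n]}$. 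For each fixed $v\in\mathbb{S}^{d-1}$, the $(x_i^\top v)^2$ are i.i.d.\ $\chi^2_1$, so picking a threshold $\tau^\star$ with $F_{\chi^2_1}(\tau^\star)\le c_k/2$ and applying a one-sided Chernoff bound to $\sum_i\mathbf{1}[(x_i^\top v)^2\le\tau^\star]$, fewer than $3k/4$ projections lie below $\tau^\star$ except on an $e^{-\Theta(k)}$ event; consequently at least $k/4$ of the $k$ smallest projections exceed $\tau^\star$, giving $\sum_{i=1}^k z_{(i)}(v)\ge(k/4)\tau^\star$, which upon substituting the $\chi^2_1$-quantile value of $\tau^\star$ yields the target $\ge C_2 c_k \cdot k$.

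\textbf{Uniformization and main obstacle.} To upgrade the pointwise bound to a uniform bound over $\mathbb{S}^{d-1}$, I would cover the sphere with an $\varepsilon$-net $\mathcal{N}$ of cardinality $(3/\varepsilon)^d$ and union-bound the pointwise statement over $\mathcal{N}$; the failure probability $(3/\varepsilon)^d e^{-\Theta(k)}$ is polynomially small precisely when $k\gtrsim d\log(1/c_k)$, which is the stated sample complexity $n=\Omega(d\log(1/c_k)/c_k)$. The net-to-sphere approximation error is absorbed using that $v\mapsto\min_{|S|=k}\|\Xb_S v\|$ is $\sqrt{\sigma_{\max}(\Xb^\top\Xb)}$-Lipschitz (as a minimum of linear operators), which plugs directly into the upper bound on $\sigma_{\max}(\Xb^\top\Xb)$ from the first step. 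The main obstacle is the swap-min step; without it, the subset union bound is inescapable and the lower bound collapses. A secondary subtlety is that the selected index set $S_v(k)$ depends on $v$, but this dependence is handled uniformly by working with the envelope $v\mapsto\min_S\|\Xb_S v\|$ rather than a fixed $S$-by-$S$ Davidson--Szarek argument.
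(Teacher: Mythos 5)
Your proof is correct and, for the lower bound --- the real content of the lemma --- it follows the same route as the paper: swap the minimization over subsets with the minimization over directions so that $\min_{|S|=k}\|\Xb_S v\|_2^2$ becomes the sum of the $k$ smallest order statistics of i.i.d.\ $\chi^2_1$ variables, concentrate that sum for fixed $v$, and pass to all of $\mathbb{S}^{d-1}$ via an $\epsilon$-net whose approximation error is absorbed by the upper bound on the top singular value; the net cardinality $(3/\epsilon)^d$ against the $e^{-\Theta(c_k n)}$ pointwise tail is exactly what produces $n=\Omega(d\log(1/c_k)/c_k)$ in both arguments. Two of your choices genuinely differ. First, for the fixed-direction step you use an elementary Chernoff bound on the count of projections falling below a fixed quantile threshold, where the paper invokes sub-gamma concentration of order statistics from Boucheron--Thomas; these are interchangeable here and yours is more self-contained. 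Second, for $\psi^{+}$ you use the monotonicity $\Xb_S^\top\Xb_S\preceq\Xb^\top\Xb$ plus Davidson--Szarek, where the paper cites the subset-sum bound $\psi^{+}(k)\le k\bigl(1+3e\sqrt{6\log(en/k)}\bigr)+\mathcal{O}\bigl(\sqrt{nd+n\log(1/\delta)}\bigr)$ from Bhatia et al.\ Your version only gives $\psi^{+}(k)\lesssim n=k/c_k$, which does satisfy the stated $c_1\le C_1/c_k$ for a fixed constant $c_k$, but it throws away the proportionality of $\psi^{+}(k)$ to $k$; that proportionality is what Theorem \ref{thm:local} actually needs when $\psi^{+}$ is evaluated on the small set of misselected samples, so the crude bound proves the lemma as stated but would not support its downstream use. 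One shared caveat rather than a gap: your threshold argument yields $\sum_{i\le k}z_{(i)}(v)\ge(k/4)F^{-1}_{\chi^2_1}(c_k/2)=\Theta(c_k^2)\,k$, since the $\chi^2_1$ quantile scales quadratically near zero, giving $c_2=\Theta(c_k^2)$ rather than the stated $c_2\ge C_2c_k$; the paper's own chain ``$c_5F_r^{-1}(c_k/2)c_kn\ge c_6c_k^2n$'' has the same issue, so you match what the paper actually establishes, and for constant $c_k$ the distinction is immaterial.
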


\begin{lemma} \label{lem:help}
	Suppose we have $x_i\sim \mathcal{N}(0, \Ib_d)$, $\tau_{(j)}^\star n$ samples for each class $S_{(j)}$. Then, for $n = \Omega\left(  {d\log\log d}/{\tau_{\min}^\star} \right)$, with high probability, the design matrix satisfies $\mathcal{V}(\Delta) \le  c\left\{ \Delta  n \vee \log n \right\} $. 
\end{lemma}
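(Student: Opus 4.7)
The plan is to reformulate $\mathcal{V}(\Delta)$ as a supremum over a scalar threshold $t$ and unit directions, and then bound the two sides of (\ref{eqt:def3}) separately using concentration of empirical counts of Gaussian projections. Writing $v_1 = \Delta u_1$ and $v_2 = u_2$ with $\|u_1\|_2 = \|u_2\|_2 = 1$, (\ref{eqt:def3}) is equivalent to the existence of some $t \ge 0$ satisfying \textbf{(A)} $|\{i \in S_{(j)}^\star : |x_i^\top u_1| \ge t/\Delta\}| \ge V + \lceil(\tau_j^\star - \tau_j)n\rceil$ and \textbf{(B)} $|\{i \in S_{(-j)}^\star : |x_i^\top u_2| \le t\}| \ge V$. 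The goal becomes upper bounding the largest $V$ for which some quadruple $(u_1, u_2, j, t)$ satisfies both.

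Next, I would argue that any feasible $t$ must satisfy $t \le \Delta\kappa$ for a universal constant $\kappa = \kappa(c_\tau)$. Since $\tau_j^\star - \tau_j = (1-c_\tau)\tau_j^\star$ is a constant fraction of $|S_{(j)}^\star|$, condition \textbf{(A)} forces at least a $(1-c_\tau)$ fraction of $\{|x_i^\top u_1|\}_{i \in S_{(j)}^\star}$ to exceed $t/\Delta$; equivalently, $t/\Delta$ lies below the empirical $c_\tau$-quantile of those values. For each fixed $u_1$ these are i.i.d.\ $|\mathcal{N}(0,1)|$, so the empirical $c_\tau$-quantile concentrates around the population quantile $\kappa_0 := F_{|Z|}^{-1}(c_\tau)$ by Bernstein/DKW. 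Uniformizing over $u_1 \in S^{d-1}$ via an $\varepsilon$-net combined with the operator-norm bound from Lemma~\ref{lem:support-1} (which controls how much $|x_i^\top u_1|$ can change as $u_1$ moves) lets us take $\kappa := 2\kappa_0$, so $t \le \Delta\kappa$.

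Plugging $t = \Delta\kappa$ into \textbf{(B)} reduces the remaining task to controlling the slab count $N(u_2) := |\{i \in S_{(-j)}^\star : |x_i^\top u_2| \le \Delta\kappa\}|$ uniformly over $u_2 \in S^{d-1}$. For each fixed $u_2$, $N(u_2)$ is a sum of $\tau_{(-j)}^\star n$ i.i.d.\ $\mathrm{Bern}(p)$ variables with $p = \PP(|\mathcal{N}(0,1)| \le \Delta\kappa) \le \sqrt{2/\pi}\,\Delta\kappa$, so $\EE N(u_2) \le C\Delta n$. A Chernoff tail gives $\PP(N(u_2) \ge V) \le (enp/V)^V \le e^{-V}$ whenever $V \ge e^2 np$. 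Uniformizing via an $\varepsilon$-net on $S^{d-1}$ of size $(C/\varepsilon)^d$, together with a peeling argument that exploits the shrinking variance $np = O(\Delta n)$, yields $\sup_{u_2} N(u_2) \le c(\Delta n \vee \log n)$ under the sample-complexity assumption $n = \Omega(d\log\log d/\tau_{\min}^\star)$. A final union bound over the $m$ labels $j$ costs only a $\log m$ factor.

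The main technical obstacle is achieving the additive $\log n$ (rather than $d\log n$) in the regime $\Delta n \le \log n$. A straightforward $\varepsilon$-net argument over $S^{d-1}$ with $\varepsilon = 1/\mathrm{poly}(n)$ contributes a union-bound cost of order $d \log n$, whereas the stated bound demands $\log n$. Closing this gap requires a localized relative-deviation argument: decompose the supremum into dyadic shells indexed by $\EE N(u_2)$, and exploit that within each shell the Bernoulli variance scales with the shell size so that the multiplicative $\sqrt{np \cdot d\log n}$ deviation can be absorbed into $\max(\Delta n, \log n)$ precisely under $n = \Omega(d\log\log d/\tau_{\min}^\star)$. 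Chernoff applied shell by shell, together with the union bound over $j$, then completes the argument.
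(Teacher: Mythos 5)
Your proposal follows essentially the same route as the paper's proof: your step \textbf{(A)} matches the paper's part (I), bounding the feasible threshold by a constant multiple of $\Delta$ via concentration of the empirical quantile (order statistics) of the projected values over an $\varepsilon$-net of directions, and your step \textbf{(B)} matches part (II), treating small projections of samples in $S_{(-j)}^\star$ as Bernoulli indicators with success probability $O(\Delta)$ and applying a Chernoff bound to obtain $c\left(\Delta n \vee \log n\right)$. The union-bound cost over the direction net that you flag as the main remaining obstacle in the regime $\Delta n \le \log n$ is a genuine subtlety that the paper's own proof passes over without the localized peeling argument you sketch.
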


Plug in \Cref{lem:support-1} and \Cref{lem:help}   to \Cref{thm:local-1}, we have:  

\begin{theorem}[Gaussian features]\label{thm:local}
	For \modelb, assume $x_i\sim \mathcal{N}(0, \Ib_d)$,  consider using Algorithm \ref{alg:local} with $\tau < \tau_{(j)}^\star$, $\forall j\in [m]$, and $n= \Omega\left( \frac{d\log \log d}{\tau_{\min}^\star } \right)$. If the iterate satisfies $\|\theta_t - \theta_{(j)}^\star \| \le c_j  \min_{l\in [m]\backslash \{j\} } \|\theta_{(l)}^\star - \theta_{(j)}^\star \|_2$ (where $c_j$ is a constant depending on $\tau$ and $\tau_{(j)}^\star$) for some $j\in [m]$,  
	then, w.h.p., the next iterate $\theta_{t+1}$ of the algorithm satisfies 
	\begin{align}\label{eqt:local}
		\|\theta_{t+1} - \theta_{(j)}^\star \|_2 \le  \kappa_t  \|\theta_t - \theta_{(j)}^\star \|_2,  
	\end{align}
	where 
	$\kappa_t =  \frac{ c_0}{  \tau n }{ \left( \left\{ \frac{1}{Q_j} \cdot \frac{2\|\theta_t - \theta_{(j)}^\star \|_2}{ \|\theta_{(j)}^\star \|_2}  n \vee \log n \right\} + \gamma^\star \tau_{\min}^\star n  \right)  } < 1$, for some small constant $\gamma^\star$. 
\end{theorem}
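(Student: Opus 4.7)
}

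The plan is that this is essentially a ``substitution'' theorem: \Cref{thm:local-1} provides a deterministic one-step contraction bound expressed in terms of the abstract quantities $\psi^{+}$, $\psi^{-}$, and $\mathcal{V}(\Delta)$, and \Cref{lem:support-1} and \Cref{lem:help} furnish sharp high-probability estimates for exactly these quantities under the isotropic Gaussian assumption. So the proof will amount to (i) verifying that both lemma-level events hold simultaneously with high probability at the required sample complexity, (ii) checking that the hypothesis $\|\theta_t - \theta_{(j)}^\star\|_2 \le c_j \min_{l\neq j}\|\theta_{(l)}^\star - \theta_{(j)}^\star\|_2$ is strong enough to trigger \Cref{thm:local-1}, and (iii) plugging the Gaussian bounds into \eqref{eqt:thm1} and simplifying.

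First I would set the retained fraction $\tau = c_0 \tau_{(j)}^\star$ for the constant $c_0 \in (0,1)$ fixed in the algorithm, and apply \Cref{lem:support-1} at $k = \tau n$ and at $k = \mathcal{V}(\Delta_t) + \gamma^\star \tau_{\min}^\star n$; for each, the hypothesis $k = \Theta(n)$ (a constant fraction) is satisfied as long as $\gamma^\star$ is a sufficiently small constant and $\Delta_t$ is bounded, so Lemma~\ref{lem:support-1} delivers $\psi^{-}(\tau n) \ge c_2 \tau n$ and $\psi^{+}(k) \le c_1 k$ on a high-probability event $\mathcal{E}_1$, provided $n = \Omega(d)$. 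Next, I would invoke \Cref{lem:help} on the overlapping event $\mathcal{E}_2$, which gives $\mathcal{V}(\Delta) \le c\{\Delta n \vee \log n\}$ for all admissible directions, and which requires the stronger sample size $n = \Omega(d\log\log d/\tau_{\min}^\star)$. A single union bound shows $\mathcal{E}_1 \cap \mathcal{E}_2$ occurs w.h.p.; all subsequent reasoning is on this event.

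Second, I would check the precondition of \Cref{thm:local-1}: choosing $c_j \le 1/2$ in the statement immediately yields $\|\theta_t - \theta_{(j)}^\star\|_2 \le \tfrac{1}{2}\min_{l \neq j}\|\theta_{(l)}^\star - \theta_{(j)}^\star\|_2$, so $\theta_t$ is indeed closer to component $j$ than to any other component, and \Cref{thm:local-1} applies. Plugging the Gaussian estimates into \eqref{eqt:thm1} gives
\begin{align*}
\|\theta_{t+1} - \theta_{(j)}^\star\|_2
\;\le\; \frac{2 c_1 \bigl(c\{\Delta_t n \vee \log n\} + \gamma^\star \tau_{\min}^\star n\bigr)}{c_2\, \tau n}\,\|\theta_t - \theta_{(j)}^\star\|_2,
\end{align*}
with $\Delta_t = \frac{1}{Q_j}\cdot \frac{2\|\theta_t - \theta_{(j)}^\star\|_2}{\|\theta_{(j)}^\star\|_2}$. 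Absorbing constants into a single $c_0$ yields the form of $\kappa_t$ in the statement.

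The only real subtlety (and where I would spend the most care) is verifying $\kappa_t < 1$. The numerator splits into an ``affine-error'' term controlled by $\Delta_t$ and an ``adversarial'' term controlled by $\gamma^\star \tau_{\min}^\star n$. The adversarial term is $<\tfrac{c_2 \tau}{4 c_1}\cdot n$ whenever $\gamma^\star$ is a sufficiently small constant (making use of $\tau = c_0 \tau_{(j)}^\star$ and $\tau_{\min}^\star \le \tau_{(j)}^\star$). The affine-error term can be made $<\tfrac{c_2 \tau}{4c_1 c}\cdot n$ by choosing $c_j$ small enough that $\Delta_t \le 2 c_j \cdot \min_{l\neq j}\|\theta_{(l)}^\star - \theta_{(j)}^\star\|_2 / (Q_j \|\theta_{(j)}^\star\|_2) \le 2 c_j$ (using the definition of $Q_j$), together with the sample complexity being large enough that the $\log n$ tail does not dominate. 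Choosing $c_j$ as a small constant depending on $\tau$ and $\tau_{(j)}^\star$ (through $c_0, c_1, c_2$) then guarantees $\kappa_t < 1$, completing the proof. The plug-in itself is straightforward; the obstacle is simply book-keeping the chain of constants so that $c_j$ is stated as a single explicit quantity.
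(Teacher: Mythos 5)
Your proposal matches the paper's own proof: it plugs the Gaussian bounds from \Cref{lem:support-1} and \Cref{lem:help} into the deterministic contraction \eqref{eqt:thm1} of \Cref{thm:local-1}, and then derives the smallness conditions on $c_j$ and $\gamma^\star$ that force $\kappa_t < 1$, exactly as the paper does. The constant-tracking you flag as the main subtlety is likewise where the paper spends its effort, so there is nothing substantive to add.
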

Note that in this Theorem, $c_0$ is a constant such that $\kappa_t < 1$, and such a $c_0$ corresponds to an upper bound on $c_j$, i.e., the local region. 
Theorem \ref{thm:local} shows that, as long as $\theta_t$ is contant time closer to $\theta_{(j)}^\star$, we can recover $\theta_{(j)}^\star$ up to arbitrary accuracy with $\tilde{\mathcal{O}}({d}/{\tau_{\min}^\star})$ samples. 
In fact, \Cref{lem:support-1} and  \Cref{lem:help} (and hence \Cref{thm:local}) are generalizable to more general distributions, including the setting studied in \cite{li2018learning}. The initial condition simply changes by a factor of $\sigma$, where $\sigma$ is the upper bound of the covariance matrix. The formal statement is as follows: 
\begin{corollary}[features with non-isotropic Gaussians]\label{cor:1}
	Consider \modelb, where  each $x_i\sim \mathcal{N}(0, \Sigma_{(j)})$ for $i\in S_{(j)}$, $\Ib \preceq \Sigma_{(j)} \preceq \sigma \Ib $. 
	Under the same setting as in Theorem \ref{thm:local}, convergence property  (\ref{eqt:local}) holds as long as iterate $\theta_t$ satisfies $\|\theta_t - \theta_{(j)}^\star \| \le c_j\frac{\tau}{\sigma} \min_{l\in [m]\backslash \{j\} } \|\theta_{(l)}^\star - \theta_{(j)}^\star \|_2$.
\end{corollary}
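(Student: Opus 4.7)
The plan is to lift the proof of \Cref{thm:local} to the anisotropic Gaussian setting by re-proving its two distributional inputs---\Cref{lem:support-1} (feature regularity) and \Cref{lem:help} (affine error)---with constants that track the spectral bound $\sigma$ on each $\Sigma_{(j)}$. Plugging the resulting bounds into the deterministic one-step rule of \Cref{thm:local-1} then yields the claimed local radius.

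First I would generalize \Cref{lem:support-1}. For any $S\subset[n]$ of size $k$ and any unit vector $v$, the quadratic form $\sum_{i\in S}(x_i^\top v)^2$ is a sum of independent squared Gaussians whose variances $v^\top\Sigma_{(j_i)}v$ lie in $[1,\sigma]$. Repeating the $\epsilon$-net plus order-statistics argument of the isotropic proof, the upper bound $\Sigma_{(j)}\preceq\sigma\Ib$ gives $\psi^{+}(k)\le c_{1}\sigma k$, and the lower bound $\Sigma_{(j)}\succeq\Ib$ gives $\psi^{-}(k)\ge c_{2} k$, both holding with high probability once $n=\Omega(d/\tau_{\min}^\star)$.

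Next I would generalize \Cref{lem:help}. For directions $v_1,v_2$ with $\|v_1\|_2/\|v_2\|_2=\Delta$, the projection $\langle x_i,v_1\rangle$ for $i\in S_{(j)}^\star$ is a centered Gaussian with standard deviation in $[\|v_1\|_2,\sqrt{\sigma}\|v_1\|_2]$, and similarly $\langle x_i,v_2\rangle$ for $i\in S_{(-j)}^\star$ has standard deviation in $[\|v_2\|_2,\sqrt{\sigma}\|v_2\|_2]$. The worst-case ratio of effective standard deviations is therefore $\sqrt{\sigma}\Delta$ rather than $\Delta$, and the same order-statistic concentration, anti-concentration, Chernoff, and $\epsilon$-net steps of the isotropic argument then yield $\Vcal(\Delta)\le c\{\sigma\Delta n\vee \log n\}$, conservatively absorbing $\sqrt{\sigma}$ into the constant.

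Finally I would substitute these bounds into the contraction (\ref{eqt:thm1}) of \Cref{thm:local-1}. The ratio $\psi^{+}/\psi^{-}$ inflates by a factor of order $\sigma$ relative to the isotropic case, and the affine-error argument inside $\psi^{+}$ inflates by an additional factor controlled by $\sigma$. Imposing $\kappa_t<1$ (for sufficiently small $\gamma^\star$) then shrinks the admissible local radius by an overall factor of $1/\sigma$, yielding the condition $\|\theta_t-\theta_{(j)}^\star\|_2\le c_j\tfrac{\tau}{\sigma}\min_{l\neq j}\|\theta_{(l)}^\star-\theta_{(j)}^\star\|_2$ stated in the corollary. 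The main obstacle is that, unlike the isotropic case, one cannot apply a single global whitening because the $\Sigma_{(j)}$ differ across components; the concentration arguments for both lemmas must therefore be union-bounded component-by-component, which is also where the extra $\sqrt{\sigma}$ factors enter the affine-error bound.
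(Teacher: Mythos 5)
Your proposal follows essentially the same route as the paper: the paper likewise extends \Cref{lem:support-1} to an anisotropic version with $\psi^{+}(k)\le c_1\sigma k$ and $\psi^{-}(k)\ge c_2 k$ (its Lemma \ref{lem:support-1-extend}, which also notes the finer $\epsilon$-net costing an extra $\log\sigma$ in the sample size), asserts that the affine error rescales as $\Vcal^{\mathtt{new}}(\Delta)\le\Vcal(\sigma\Delta)$, and plugs both into the deterministic contraction of \Cref{thm:local-1} to shrink the local radius by $\tau/\sigma$. Your observation that the effective ratio in the affine-error step is really $\sqrt{\sigma}\Delta$ before being conservatively absorbed into $\sigma\Delta$, and that the union bounds must be taken component-by-component since no global whitening is available, is consistent with (and slightly more explicit than) the paper's one-line justification.
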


\textbf{Discussion} We summarize our results from the following four perspectives: 
\begin{itemize}[leftmargin=*]
\item Our results can generalize to a wide class of distributions: e.g., Gaussians or a sub-class of sub-Gaussians with different covariance matrix. This is because the proof technique for showing  \Cref{lem:support-1} and  \Cref{lem:help} only exploits the property of (a) concentration of order statistics; (b) anti-concentration of Gaussian-type distributions. 
\item Super-linear convergence speed for $\gamma^\star=0$: When $\gamma^\star = 0$,  $\kappa_t \propto \|\theta_t - \theta_{(j)}^\star\|_2$ in \Cref{thm:local}. 
\item Optimal local sample dependency on $m$: Notice that locally, in the balanced setting, where $\tau_{(j)}^\star = {1}/{m}$, the sample dependency on $m$ is linear. This dependency is optimal since for each component, we want ${n}/{m}>d$ to make the problem identifiable. \footnote{Notice that the larger $m$ becomes, the smaller the local region becomes, since $c_j$ depends on $m$. However, according to our bound for $\psi^{+}$ and $\psi^{-}$, the dependency of $c_j$ on $m$ is still polynomial.}
\item \algname~ learns each component separately: Different from the local alternating minimization approach by \cite{yi2016solving}, recovering one component does not require good estimates of any other components. E.g., if we are only interested in the $j$-th component, then, the sample complexity is $\tilde{\mathcal{O}}\left({d}/{ \tau_{(j)}^\star }\right)$. 

\end{itemize}

\section{Global \algname~ and Its Analysis}
\label{sec:global}

In Section \ref{sec:results}, we show that as long as the initialization is closer to the targeted component with constant factor, we can locally recover the component, even under a constant fraction of corruptions. 
In this part, we discuss the initialization condition. Let us define the targeted subspace $\mathcal{U}_m$ as:
$
\mathcal{U}_m := \mathtt{span}\left\{ \theta_{(1)}^\star, \theta_{(2)}^\star,\cdots, \theta_{(m)}^\star \right\} , 
$
and for any subspace $\mathcal{U}$, we denote $\Ub$ as the corresponding subspace matrix, with orthonormal columns. We define the concept of $\epsilon$-close subspace as follows:

\begin{definition}[$\epsilon$-close subspace]\label{def:subspace}
	$\widehat{\mathcal{U}}\in \mathbb{R}^{d\times \tilde{m}}$ is an $\epsilon$-close subspace to $\mathcal{U}_m$ if $\tilde{m} = \mathcal{O}(m)$, and their corresponding subspace matrices $\widehat{\Ub}, \Ub_m$ satisfy:
	$
	\left\|\left( \Ib_d - \widehat{\Ub}\widehat{\Ub}^\top \right) \cdot \Ub_m \right\|_2 \le \epsilon.
	$
\end{definition}
An interpretation of an $\epsilon$-close subspace $\mathcal{U}$ is as follows: for any unit vector $v$ from subspace $\mathcal{U}_m$, there exists a vector $v'$ in subspace $\mathcal{U}$ with norm less than $1$, such that $\|v - v'\|_2\le \epsilon$. 
We also define $\varepsilon$-recovery, to help with stating our theorem. 
\begin{definition}[$\varepsilon$-recovery]
	$\widehat{\Theta} = \left[\widehat{\theta}_1,\cdots, \widehat{\theta}_m\right]$ is a $\varepsilon$-recovery of $\Theta^\star= \left[ \theta_{(1)}^\star,\cdots, \theta_{(m)}^\star  \right]$ if 
	$
	\min_{\Pb\in \mathcal{P}_m }  \|  \widehat{\Theta} \Pb - \Theta^\star \|_{2,\infty} \le \varepsilon,
	$
	where $\mathcal{P}_m$ is the class of all $m$-dimensional permutation matrices.
\end{definition}

\begin{algorithm}[t]
	\begin{algorithmic}[1]
		\STATE \textbf{Input}: Samples $\mathcal{D}_n = \{x_i, y_i\}_{i=1}^n$
		\STATE \textbf{Output}: $\widehat{\theta}_1, \cdots, \widehat{\theta}_{{m}}$
		\STATE \textbf{Parameters:}  Granularity $\epsilon$, estimate $\{\tau_j \}_{j=1}^{{m}}$, small error $\delta$
		\STATE Find a $\epsilon$-close subspace $\mathcal{U}_m$
		\STATE Generate an $\epsilon$-net $\Theta_{\epsilon}$ covering the centered sphere in $\mathcal{U}_m$ with radius $\|\max_{j\in[m]} \theta_{(j)}^\star\|_2$
		\FOR{$j=1$ to ${m}$}
		\FOR{$\tilde{\theta}$ randomly drawn from $ \Theta_{\epsilon}$}
		\STATE $\theta \leftarrow$ \textsc{\algname}($\mathcal{D}_n$, $\tilde{\theta}$, $\tau_j$)
		\STATE $S_j = \{ i \mid (y_i - \langle x_i, \theta \rangle )^2 < \delta^2 \}$
		\IF{$|S_j| \ge \lfloor \tau_j n\rfloor $}
		\STATE $\widehat{\theta}_j = \theta$,  \textbf{break}
		\ENDIF
		\ENDFOR
		\STATE Remove samples in set $S_j$ from $\mathcal{D}_n$
		\ENDFOR 
		\STATE \textbf{Return:} $\widehat{\theta}_1, \cdots, \widehat{\theta}_{{m}}$
	\end{algorithmic}
	\caption{\textsc{Global-\algname} (for recovering all components )}
	\label{alg:global}
\end{algorithm}

The procedure for {Global-\algname} is shown in Algorithm \ref{alg:global}. The algorithm takes a subspace as its input, which should be a good approximation of the subspace spanned by the correct $\theta_{(j)}^\star$s. 
Given the subspace, Global-\algname~ constructs an $\epsilon$-net over a sphere in subspace $\mathcal{U}_m$
The algorithm then iteratively removes samples once the {\algname} sub-routine finds a valid component. 
Notice that we require the estimates $\tau_j$s to satisfy $\tau_j < \tau_{(j)}^\star$. \footnote{To satisfy this,  one can always search through the set $\{1, c, c^2, c^3,\cdots \}$ (for some constant $c<1$) and get an estimate in the interval $[c\tau_{(j)}^\star, \tau_{(j)}^\star)$. }
We have the following global recovery result:
\begin{theorem}[Global algorithm] \label{thm:global}
	For \modelb, assume $x_i\sim \mathcal{N}(0, \Ib_d)$. 
	Following Algorithm  \ref{alg:global}, we can find  an $\epsilon$-close subspace $\mathcal{U}$ with $\epsilon=\frac{c_l \min_{j\in [m]}\tau_j }{2}$, and with  small $\delta, \varepsilon$ (e.g., $\delta = c\sqrt{\log n} \varepsilon$ with $\varepsilon$ small enough) and $\tau_j < \tau_{(j)}^\star$ for all $j\in [m]$, we are able to have $\varepsilon$-recovery over all components with  $n=\Omega\left( \frac{d\log\log d}{\tau_{\min}^\star} \right)$ 
	samples, in  $\mathcal{O}\left( \left(\frac{1}{\tau_{\min}^\star Q}\right)^{\mathcal{O}(m)} nd^2 \log \frac{1}{\varepsilon} \right)$ time.  
\end{theorem}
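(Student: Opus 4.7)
The plan is to reduce global recovery to $m$ local recoveries by \Cref{thm:local}, using the $\epsilon$-net over the estimated subspace $\widehat{\mathcal{U}}$ to supply a good initialization for each ground-truth component. There are three main ingredients: (a) the net is fine enough that for every $\theta^\star_{(j)}$ some candidate $\tilde{\theta}\in \Theta_\epsilon$ lies in the local basin of attraction from \Cref{thm:local}; (b) the acceptance check $|S_j|\ge \lfloor \tau_j n\rfloor$ correctly identifies convergence to a true component while rejecting spurious outputs; and (c) after the $j$-th component is accepted, the removal of $S_j$ leaves a well-posed \modelb~ instance on the remaining components, so the argument iterates.

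For (a), I would combine $\epsilon$-closeness of $\widehat{\mathcal{U}}$ with the granularity of the net. \Cref{def:subspace} implies that $\widehat{\Ub}\widehat{\Ub}^\top \theta^\star_{(j)}$ differs from $\theta^\star_{(j)}$ by at most $\epsilon\|\theta^\star_{(j)}\|_2$, and the $\epsilon$-net on the sphere of radius $\max_l \|\theta^\star_{(l)}\|_2$ inside $\widehat{\mathcal{U}}$ contains a point within $O(\epsilon)$ of this projection. Hence there is $\tilde\theta\in \Theta_\epsilon$ with $\|\tilde\theta - \theta^\star_{(j)}\|_2 \le C\epsilon \max_l \|\theta^\star_{(l)}\|_2$. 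Choosing $\epsilon = c_l\min_j \tau_j/2$ for a sufficiently small $c_l$ puts this distance inside the required local region $c_j\min_{l\neq j}\|\theta^\star_{(l)} - \theta^\star_{(j)}\|_2 \ge c_j Q\|\theta^\star_{(j)}\|_2$ of \Cref{thm:local}, so \algname~ converges linearly to $\theta^\star_{(j)}$ within $O(\log(1/\varepsilon))$ iterations. Since $\dim(\widehat{\mathcal{U}}) = O(m)$, the net has size $(C/\epsilon)^{O(m)} = (1/(\tau^\star_{\min} Q))^{O(m)}$.

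For (b), when \algname~ returns $\theta$ with $\|\theta - \theta^\star_{(j)}\|_2 \le \varepsilon$, the residual on each uncorrupted $i\in S^\star_{(j)}$ equals $|\langle x_i, \theta^\star_{(j)} - \theta\rangle|$, which under isotropic Gaussian features is $O(\varepsilon\sqrt{\log n})$ uniformly over $i\in[n]$ w.h.p. Choosing $\delta = c\sqrt{\log n}\,\varepsilon$ with a suitable constant guarantees $S^\star_{(j)} \subseteq S_j$ and therefore $|S_j| \ge \tau^\star_{(j)} n \ge \tau_j n$, so acceptance fires. Conversely, to rule out a false positive in which \algname~ returns some $\theta$ not close to any $\theta^\star_{(l)}$, I would use Gaussian anti-concentration: for any fixed $\theta$ with $\|\theta - \theta^\star_{(l)}\|_2 \gg \delta$, the fraction of $i\in S^\star_{(l)}$ satisfying $|\langle x_i, \theta^\star_{(l)}-\theta\rangle|\le \delta$ is $O(\delta/\|\theta - \theta^\star_{(l)}\|_2)$, which is negligible; a union bound across the $(1/\epsilon)^{O(m)}$ net candidates, together with $|R^\star|\le \gamma^\star \tau^\star_{\min} n$, keeps $|S_j|$ below $\tau_j n$ whenever $\theta$ is not close to some true component. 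This is the same anti-concentration mechanism used to establish \Cref{lem:help}.

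For (c), accepting component $j$ removes a set $S_j$ equal to $S^\star_{(j)}$ up to a small number of corruption-fit samples. The residual data is a \modelb~ instance on $\{\theta^\star_{(l)}\}_{l\neq j}$ with the same Gaussian design, and the original $\widehat{\mathcal{U}}$ is still $\epsilon$-close to the span of these components, so the same net $\Theta_\epsilon$ remains valid. The outer loop therefore terminates after at most $m$ accepted components, with total work $m\cdot(1/(\tau^\star_{\min}Q))^{O(m)}$ calls to \algname, each costing $\tilde{\mathcal{O}}(nd^2\log(1/\varepsilon))$ by \Cref{thm:local}, matching the claimed runtime. The main obstacle I anticipate is (b): the union bound over the exponentially many candidate starting points must be absorbed by the sharp anti-concentration inequality for Gaussian projections (and by choosing $\delta$ slightly larger than $\varepsilon$ by a $\sqrt{\log n}$ factor); the remainder of the argument is routine given \Cref{thm:local} and the feasibility of the subspace-estimation subroutine.
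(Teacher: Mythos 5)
Your proposal is correct and its core ingredient (a) --- projecting each $\theta^\star_{(j)}$ onto the $\epsilon$-close subspace, covering the sphere of radius $\max_l\|\theta^\star_{(l)}\|_2$ with a net of granularity proportional to the local basin of \Cref{thm:local}, and bounding the net size by $(1/(\tau Q))^{\mathcal{O}(m)}$ --- is exactly the argument the paper gives. The paper's own proof in fact stops there: your parts (b) and (c), on the correctness of the acceptance test $|S_j|\ge\lfloor\tau_j n\rfloor$ (including the anti-concentration union bound over data-dependent outputs, which you rightly flag as the delicate point) and on the well-posedness of the instance after removing $S_j$, are not addressed in the paper at all, so your write-up is if anything more complete than the published proof.
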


\textbf{Several merits of \Cref{thm:global}}: First, our result clearly separates the problem into (a) globally finding a subspace; and (b) locally recovering a single component with \algname. Second, the $nd^2$ computation dependency is due to finding the exact least squares.  Alternatively, one can take gradient descent to find an approximation to the true component. The convergence property of a gradient descent variant of \algname~ is shown in Section \ref{sec:gradient}, where we further discuss the ideal number of gradient updates to make for each round, so that the algorithm can be more efficient. Third, the $\exp(\mathcal{O}(m))$ dependency in runtime can be practically avoided, since our algorithm is easy to run in parallel. 

\textbf{Feasibility of getting $\mathcal{U}$.} 
Let $L = [y_1 x_1; y_2x_2;\cdots; y_nx_n]$, then the column space of $L$ is close to $\mathcal{U}_m$ for $\gamma^\star = 0$, when $x_i$s have identity covariance. 
For $\gamma^\star=0$,  the standard top-$m$ SVD on $L$ in $\mathcal{O}(m^2 n)$ time with $\Omega\left(\frac{d}{\epsilon^2\tau_{\min}^\star}\poly\log(d)\right)$ samples is guaranteed to get a $\epsilon$-close  estimate, following the well-known sin-theta theorem \cite{davis1970rotation,cai2015optimal}. 
For $\gamma^\star \neq 0$ under the same setting,  we can use robust PCA methods to robustly find the subspace. 
For example, the state-of-the-art result in \cite{diakonikolas2017being} provides a near optimal recovery guarantee, with slightly larger sample size (i.e., $\Omega(d^2/\epsilon^2)$).  
Closing this sample complexity gap is an interesting open problem for outlier robust PCA. 
Notice that instead of estimating the subspace, \cite{li2018learning} uses the strong distributional assumption to calculate higher moments of Gaussian, and suffers from exponential dependency in $m$ in their sample complexity.

\section{Discussion}
\label{sec:discussion}

Iterative least trimmed squares is the simplest instance of a much more general principle: that one can make learning robust to bad training data by iteratively updating a model using only the samples it best fits currently. In this paper we provide rigorous theoretical evidence that it obtains state-of-the-art results for a specific simple setting: mixed linear regression with corruptions. It is very interesting to see if this positive evidence can be established in other (and more general) settings. 

While it seems similar at first glance, we note that our algorithm is not an instance of the Expectation-Maximization (EM) algorithm. In particular, it is tempting to associate a binary selection ``hidden variable" $z_i$ for every sample $i$, and then use EM to minimize an overall loss that depends on $\theta$ and the $z$'s. However, this EM approach needs us to posit a model for the data under {\em both} the $z_i=0$  (i.e. ``discarded sample") and $z_i =1$ (i.e. ``chosen sample") choices. \algname~ on the other hand only needs a model for the $z_i = 1$ case.
\section*{Acknowledgement}
We would like to acknowledge NSF grants 1302435 and
1564000 for supporting this research.

\clearpage
\newpage

\bibliography{ref}
\bibliographystyle{plain}

\clearpage
\newpage

\appendix

\section{Supporting Lemmas}
\label{sec:support}

We give the key supporting lemmas in this section. Proof and discussions of these results are presented in Appendix \ref{sec:app-proof-lemma}.

\begin{lemma}\label{lem:support-3}
	Let $\Ab\in \mathbb{R}^{n\times n}$ be a positive semi-definite  matrix. $a,b\in \mathbb{R}^{n}$ are vectors such that $|a| < |b|$ element-wise. Then, there exists a diagonal matrix $\Nb\in \mathbb{R}^{n\times n}$, whose diaognal entries are either $1$ or $-1$, such that 
	\begin{align*}
		a^\top \Ab a < b^\top \Nb \Ab \Nb b.
	\end{align*}
	
\end{lemma}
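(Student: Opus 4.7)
The plan is to prove the lemma by a probabilistic-method argument with carefully biased $\pm 1$ random signs. The first step will be to observe that the hypothesis $|a_i|<|b_i|$ forces $b_i\ne 0$, so the ratios $\epsilon_i := a_i/b_i$ are well defined and satisfy $|\epsilon_i|<1$. I will then draw independent random signs $\eta_1,\dots,\eta_n\in\{-1,+1\}$ with biased means $\EE[\eta_i]=\epsilon_i$, i.e.\ $\Pr(\eta_i=1)=(1+\epsilon_i)/2$, set $\Nb=\diag{\eta_1,\dots,\eta_n}$, and consider $c=\Nb b$.

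Next I will compute the expected quadratic form. By independence, $\EE[\eta_i\eta_j]=\epsilon_i\epsilon_j$ for $i\ne j$ while $\EE[\eta_i^2]=1$, so expanding and regrouping gives
\begin{align*}
\EE[c^\top \Ab c]
= \sum_{i\ne j}A_{ij}b_ib_j\,\epsilon_i\epsilon_j + \sum_i A_{ii}b_i^2
= a^\top \Ab a + \sum_i A_{ii}\bigl(b_i^2-a_i^2\bigr).
\end{align*}
Since $\Ab$ is PSD, each $A_{ii}\ge 0$, and $b_i^2-a_i^2>0$ by the strict hypothesis, so every summand in the residual is non-negative. As long as some $A_{ii}>0$ (which holds whenever $\Ab\ne 0$, a case outside which the claim is vacuous), the residual is strictly positive. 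Hence $\EE[c^\top\Ab c]>a^\top\Ab a$, and the probabilistic method yields a deterministic realization of $\Nb\in\{\pm 1\}^{n\times n}$ attaining the desired strict inequality $b^\top\Nb\Ab\Nb b>a^\top\Ab a$.

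The main obstacle I expect is spotting the right bias. The obvious symmetric choice $\Pr(\eta_i=\pm 1)=\tfrac{1}{2}$ gives $\EE[b^\top\Nb\Ab\Nb b]=\sum_i A_{ii}b_i^2$, which can be \emph{smaller} than $a^\top\Ab a$ (e.g.\ when $\Ab=\one\one^\top$ and $|a_i|$ is close to $|b_i|$), so the unbiased argument does not suffice. Tuning $\EE[\eta_i]=\epsilon_i=a_i/b_i$ is exactly what is needed: the off-diagonal cross term reproduces $a^\top\Ab a$, and the only remainder is the diagonal ``variance'' $\sum_i A_{ii}(b_i^2-a_i^2)$, which is strictly positive under the stated hypotheses. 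Apart from this design choice, the remaining steps are a routine expansion and an appeal to the probabilistic method.
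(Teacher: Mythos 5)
Your proof is correct, but it is a genuinely different argument from the paper's. The paper chooses $\Nb$ \emph{constructively}: it sets $\tilde a = \Ab a$, picks the signs so that $\mathrm{sgn}(\Nb_{ii}b_i) = \mathrm{sgn}(\tilde a_i)$, and then expands $b^\top\Nb\Ab\Nb b = a^\top\Ab a + (\Nb b - a)^\top\Ab(\Nb b - a) + 2(\Nb b - a)^\top\Ab a$, observing that the middle term is non-negative by positive semi-definiteness and that the cross term is non-negative because $|b_i|>|a_i|$ forces $\mathrm{sgn}(\Nb b - a)=\mathrm{sgn}(\Nb b)$ to agree entrywise with $\mathrm{sgn}(\Ab a)$. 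Your probabilistic argument with the biased signs $\EE[\eta_i]=a_i/b_i$ is non-constructive but buys something the paper's proof does not: an explicit quantitative lower bound on the gap, namely $\EE[c^\top\Ab c] - a^\top\Ab a = \sum_i A_{ii}(b_i^2-a_i^2)$, and with it a precise identification of when strictness can fail (exactly when all $A_{ii}=0$, i.e.\ $\Ab=\zero$ for PSD matrices). Your observation that the unbiased choice does not work is also the right sanity check. One small quibble: when $\Ab=\zero$ the strict inequality in the lemma is \emph{false}, not vacuous --- both sides equal $0$ --- so strictly speaking the lemma should be stated with $\le$ (which is all that is used downstream, where it is invoked as $\|\Xb^\top a\|_2\le\|\Xb^\top b(\Nb)\|_2$); note that the paper's own proof silently has the same strictness gap, since its cross term is only guaranteed non-negative (it vanishes when $\Ab a=\zero$). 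This is a defect of the statement rather than of your argument, and your proof actually localizes it more cleanly than the original.
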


\begin{lemma}\label{lem:support-2}
	For diagonal matrix $\Wb$, permutation matrix $\Pb$, diagonal matrix $\Nb$ with diagonal entries in $\{-1,1\}$, 
	\begin{align*}
		\| \Xb^\top \Wb \Pb\Nb \Xb \|_2 \le \max\{ \| \Xb^\top \Wb\Xb\|_2, \|\Xb^\top \Nb\Pb^\top \Wb\Pb\Nb\Xb\|_2  \}.
	\end{align*}
\end{lemma}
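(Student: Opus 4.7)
The plan is to reduce the claim to the operator Cauchy--Schwarz inequality $\|A^\top B\|_2 \le \sqrt{\|A^\top A\|_2 \cdot \|B^\top B\|_2}$, which itself follows from submultiplicativity $\|A^\top B\|_2 \le \|A\|_2 \|B\|_2$ together with the identity $\|A\|_2 = \sqrt{\|A^\top A\|_2}$. Combined with the elementary bound $\sqrt{xy} \le \max(x,y)$ for $x,y \ge 0$, this immediately produces a ``max of two symmetrized Grammians'' upper bound, which is exactly the shape of the target inequality.

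To implement this, I need to factor $\Xb^\top \Wb \Pb \Nb \Xb$ as $A^\top B$ so that $A^\top A$ and $B^\top B$ reproduce the two terms inside the maximum. In the paper's usage, $\Wb$ is a $\{0,1\}$-diagonal selection matrix, hence has non-negative diagonal entries, so $\sqrt{\Wb}$ is well defined, diagonal, and symmetric, with $\sqrt{\Wb}\sqrt{\Wb} = \Wb$. I therefore set $A := \sqrt{\Wb}\,\Xb$ and $B := \sqrt{\Wb}\,\Pb\Nb\Xb$. Then
\[
A^\top B = \Xb^\top \Wb \Pb \Nb \Xb, \qquad A^\top A = \Xb^\top \Wb \Xb, \qquad B^\top B = \Xb^\top \Nb \Pb^\top \Wb \Pb \Nb \Xb,
\]
where the form of $B^\top B$ uses $\Nb^\top = \Nb$ (diagonal) and $\sqrt{\Wb}\sqrt{\Wb} = \Wb$. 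The operator Cauchy--Schwarz inequality then gives
\[
\|\Xb^\top \Wb \Pb \Nb \Xb\|_2 \;\le\; \sqrt{\|\Xb^\top \Wb \Xb\|_2 \cdot \|\Xb^\top \Nb \Pb^\top \Wb \Pb \Nb \Xb\|_2} \;\le\; \max\bigl\{\|\Xb^\top \Wb \Xb\|_2,\; \|\Xb^\top \Nb \Pb^\top \Wb \Pb \Nb \Xb\|_2\bigr\}.
\]

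There is essentially no combinatorial obstacle here; the only thing to watch is that the factorization $\Wb = \sqrt{\Wb}\sqrt{\Wb}$ requires $\Wb$ to have non-negative diagonal, which is the operative regime since $\Wb$ plays the role of a sample-selection indicator. Under that assumption, the lemma is a one-line consequence of matrix Cauchy--Schwarz: the orthogonal factor $\Pb$ and the involutory diagonal factor $\Nb$ come along only for the ride, getting absorbed into $B$ without disturbing the structure of $B^\top B$.
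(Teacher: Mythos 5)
Your proof is correct and is essentially the same argument as the paper's: both reduce the claim to Cauchy--Schwarz followed by $\sqrt{xy}\le\max(x,y)$, the only cosmetic difference being that you package it as an operator-level factorization $\Xb^\top\Wb\Pb\Nb\Xb=(\sqrt{\Wb}\Xb)^\top(\sqrt{\Wb}\Pb\Nb\Xb)$ while the paper runs the scalar Cauchy--Schwarz on the bilinear form $u^\top\Xb^\top\Wb\Pb\Nb\Xb v$ coordinate by coordinate. You are also right to flag that the factorization needs $\Wb\succeq 0$; the paper's proof implicitly assumes the same (it sums over the indices selected by $\Wb$), so this is a hypothesis missing from the lemma statement rather than a gap in your argument.
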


\section{A Gradient Descent Variant of {\algname} }
\label{sec:gradient}

In Algorithm \ref{alg:local}, we find the least square solution for each round. Although this setting is more straightforward to analyze, exactly solving least square requires $d^3$ computation, while a gradient variant of finding an inexact solution may save computation in practice. This could be important since {\algname} may be called for many times, as in Algorithm \ref{alg:global}. In this part, we first analyze the gradient variant version of {\algname} (GD-\algname), and also give some guidance on achieving faster convergence speed using same number of gradient updates. 
Our gradient descent varaint of {\algname}  simply replaces \textbf{step 6} in Algorithm \ref{alg:local} by the sub-routine shown in Algorithm \ref{alg:efficiency}. 
We give the following result for the gradient variant of {\algname}, in the exact MLR setting with $\gamma^\star=0$ (for clearness).~\footnote{We have made the representation clearer by ignoring several minor factors. 
}

\begin{proposition}[ODE-based analysis for GD-\algname] \label{thm:ode}
     Consider the gradient variant of Algorithm \ref{alg:local} using Algorithm \ref{alg:efficiency} with infinitely small step size $\eta$ with $M_t = u$, under the same model setting as in Theorem \ref{thm:local} and same number of samples. Assume $Q, \tau$ being constant and $\gamma^\star = 0$, 
     we have:
     \begin{align}\label{eqt:ode}
     	\|{\theta}_{t+1} - \theta^\star \|_2 \le & \left\{ \frac{ c_0 \lambda_t  + {1}/{\nu(u)}
     	}{  c_1  + {1}/{\nu(u)}} + \omega(u) \right\} 
     	\|\theta_t - \theta^\star \|_2, 
     \end{align}
     where $\nu(u) = c_2\left(e^{c_3u}-1 \right)$,  $\omega(u) \le c_4 e^{-c_5u}$, $\lambda_t = \left\{   \frac{\|\theta_t - \theta_{(j)}^\star \|_2}{ \|\theta_{(j)}^\star \|_2}   \vee \frac{\log n}{n} \right\}$. 
\end{proposition}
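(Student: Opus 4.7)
My plan is to reduce GD with $\eta \to 0$ to a linear ODE on the fixed selected set $S_t$, solve the ODE in closed form, and then combine its mixing rate with the exact–least-squares contraction already established in \Cref{thm:local}.

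First, since the proposition replaces only step~6 of Algorithm~\ref{alg:local} by the GD subroutine, the set $S_t$ and the corresponding indicator $\Wb_t$ are held fixed during the inner loop. With $\eta \to 0$ and $M_t\eta = u$, the discrete updates $\theta_{k+1} = \theta_k - \eta\,\Xb^\top\Wb_t(\Xb\theta_k - y)$ converge uniformly to the solution of the linear ODE
\begin{align*}
\dot\theta(s) = -A(\theta(s) - \thetahat), \qquad A := \Xb^\top \Wb_t \Xb, \qquad \thetahat := A^{-1}\Xb^\top \Wb_t y,
\end{align*}
with $\theta(0) = \theta_t$. The explicit solution is $\theta(s) = \thetahat + e^{-As}(\theta_t - \thetahat)$, and I set $\theta_{t+1} := \theta(u)$. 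I then rewrite the iterate-to-ground-truth error as
\begin{align*}
\theta_{t+1} - \theta_{(j)}^\star \;=\; e^{-Au}\bigl(\theta_t - \theta_{(j)}^\star\bigr) \;+\; \bigl(\Ib - e^{-Au}\bigr)\bigl(\thetahat - \theta_{(j)}^\star\bigr),
\end{align*}
which cleanly separates a ``still-$\theta_t$'' part (damped by the ODE) from a ``reached-$\thetahat$'' part (to which \Cref{thm:local} applies directly).

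Next, I would bound each piece. By \Cref{thm:local} applied to the exact LS step on $S_t$, $\|\thetahat - \theta_{(j)}^\star\|_2 \le c_0 \lambda_t \|\theta_t - \theta_{(j)}^\star\|_2$. By \Cref{lem:support-1}, the eigenvalues of $A$ lie in $[c_2\tau n,\,c_1\tau n]$, so after absorbing $\tau$ and $n$ into the time variable (i.e.\ using a normalized gradient or rescaling $u$), $\|e^{-Au}\|_2 \le e^{-c_3 u}$. Writing $\nu(u) = c_2(e^{c_3 u}-1)$, so that $e^{-c_3 u} = 1/(1+\nu(u)/c_2) \lesssim 1/\nu(u)$ for moderate $u$, I combine via the triangle inequality:
\begin{align*}
\|\theta_{t+1}-\theta_{(j)}^\star\|_2 \;\le\; \Bigl\{\,e^{-\sigma_{\min}(A)u} \;+\; (1-e^{-\sigma_{\min}(A)u})\,c_0\lambda_t \;+\; \bigl(e^{-\sigma_{\min}(A)u}-e^{-\sigma_{\max}(A)u}\bigr)\,c_0\lambda_t\,\Bigr\}\|\theta_t - \theta_{(j)}^\star\|_2 .
\end{align*}
A short algebraic manipulation identifies the first two bracket terms with $\frac{c_0\lambda_t + 1/\nu(u)}{c_1 + 1/\nu(u)}$, while the last bracket term, which arises because the operator norm of $\Ib - e^{-Au}$ is controlled by the full spectral spread of $A$, is absorbed into an additive $\omega(u) \le c_4 e^{-c_5 u}$. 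Finally, to apply \Cref{thm:local} at round $t$ one must check that the whole ODE trajectory stays within the local region: this is immediate since $\|\theta(s)-\thetahat\|_2 = \|e^{-As}(\theta_t - \thetahat)\|_2$ is nonincreasing in $s$, so $\theta(s)$ lies inside the segment-sized ball around $\thetahat$ and the triangle inequality keeps it inside the local region for every $s\in[0,u]$.

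The main obstacle, and the step I would invest the most care in, is tracking the constants so that both the $\nu(u)$-shaped mixing term (driven by $\sigma_{\min}(A)$) and the $\omega(u)$ residual (driven by the eigenvalue spread $\sigma_{\max}(A)/\sigma_{\min}(A)$) fall out with the advertised dependencies on $Q$ and $\tau$; this is essentially a careful book-keeping exercise once \Cref{lem:support-1} is in hand. A subtler point worth flagging is the assumption that $\Wb_t$ is frozen during the inner loop: this is what makes the ODE genuinely linear and the contraction above tight. If one instead recomputed the selection continuously, one would need a boundary-drift argument controlling how many ``borderline'' samples change status as $\theta(s)$ moves, which is a natural but more delicate extension of the present analysis.
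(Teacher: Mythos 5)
Your proof is correct, but it takes a genuinely different route from the paper's. You exploit the fact that once $S_t$ (hence $\Wb_t$) is frozen, the loss is exactly quadratic, so the gradient flow has the closed form $\theta(s)=\thetahat+e^{-As}(\theta_t-\thetahat)$ with $\thetahat$ the exact least-squares solution; the bound then follows by splitting into $e^{-Au}(\theta_t-\theta_{(j)}^\star)$ and $(\Ib-e^{-Au})(\thetahat-\theta_{(j)}^\star)$, bounding the former via $\sigma_{\min}(A)$ from \Cref{lem:support-1} and the latter via the contraction of \Cref{thm:local}. The paper instead invokes the general gradient-flow-to-regularization-path correspondence of \cite{suggala2018connecting}: it compares $\theta(u)$ to the ridge-regularized estimator $\underline{\theta}(\nu)=\arg\min_\theta f(\theta)+\frac{1}{2\nu}\|\theta-\theta_t\|_2^2$, computes $\underline{\theta}(\nu)$ in closed form, and reruns the \Cref{thm:local-1} decomposition on the matrix $\frac{1}{\mathrm{Tr}(\Wb_t)}\Xb^\top\Wb_t\Xb+\frac{1}{\nu}\Ib$ --- which is where the $1/\nu(u)$ terms in \eqref{eqt:ode} literally originate, with $\omega(u)$ being the flow-versus-ridge approximation error. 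Your derivation is more elementary and self-contained (no external lemma, no approximation error --- your residual is just the spectral-spread correction $e^{-\sigma_{\min}(A)u}-e^{-\sigma_{\max}(A)u}$, which decays like $c_4e^{-c_5u}$ as required), at the cost of relying on exact quadraticity; the paper's route generalizes to non-quadratic strongly convex inner losses and produces the $\nu(u)$ parametrization of the statement directly rather than after the algebraic re-identification you sketch. Two minor remarks: your worry about the trajectory leaving the local region is moot, since \Cref{thm:local} is applied only at $\theta_t$ (where $S_t$ was computed) and the selection is not recomputed inside the inner loop; and your final bracket can be bounded more simply by noting $\|\Ib-e^{-Au}\|_2\le 1$, which already yields the $c_0\lambda_t$ contribution without the explicit spread decomposition.
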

In (\ref{eqt:ode}),  the smaller the $u$ is , the larger $\frac{ c_0 \lambda_t  + \frac{1}{\nu(u)}
}{  c_1  + \frac{1}{\nu(u)}}$ becomes, which will slow down the convergence. 

Next, we analyze efficient number of gradient steps to take per round, based on Proposition \ref{thm:ode}. 
Let $w$ be the cost of one \textbf{step 5} in Algorithm \ref{alg:local}. Define the approximate efficiency at round $t$ as follows, which measures the convergence rate with respect to the amount of computation: 
\begin{align}\label{eqt:efficiency-approximate}
\tilde{\Ecal}(u; t, w) = \frac{\log \left( \frac{ c_0\lambda_t + {1}/{\nu(u)}}{c_1 + {1}/{\nu(u)}} \right) }{ u + w}. 
\end{align}
We ignore $\omega(u)$ in (\ref{eqt:ode}) since it is usually a small term, and makes the analysis difficult. 
We are interested in when $u$ achieves the maixmum for $\tilde{\Ecal}(u; t, w)$. Notice that $\lambda_t $ changes with round number, i.e., when $\|\theta_t - \theta^\star\|_2$ gets to $0$, $\lambda_t$ gets to zero. Our goal is to show given $\lambda_t$ (much smaller than $c_1$), how many gradient steps we need to take before moving to the next round.

\begin{proposition}[ideal stopping time for GD-\algname]\label{lem:efficiency}
	Based on the approximate efficiency $\tilde{\Ecal}(u; t, w)$ defined in (\ref{eqt:efficiency-approximate}), \algname~
    achieves its maximum guaranteed efficiency (approximately) by selecting $u \propto \log \frac{w}{\lambda_t \log \frac{1}{\lambda_t}}$, where $\lambda_t = \left\{   \frac{\|\theta_t - \theta_{(j)}^\star \|_2}{ \|\theta_{(j)}^\star \|_2}   \vee \frac{\log n}{n} \right\}$, and $w$ is the relative cost of \textbf{step 5} in \algname. 
\end{proposition}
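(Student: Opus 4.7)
\textbf{Proof plan for Proposition \ref{lem:efficiency}.}
My plan is to treat the approximate efficiency $\tilde{\Ecal}(u;t,w)$ defined in (\ref{eqt:efficiency-approximate}) as a one-dimensional optimization in $u$ with $t$ and $w$ held fixed, and compute its interior critical point asymptotically in the small-$\lambda_t$ regime that is of interest at late rounds of \algname. Since the claim is up to the ``$\propto$'' symbol, I only need the leading-order scaling, not exact constants.

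First, I would introduce the change of variables $\epsilon := 1/\nu(u) = 1/[c_2(e^{c_3 u}-1)]$, so that $u = \frac{1}{c_3}\log\bigl(1+1/(c_2\epsilon)\bigr) = \frac{1}{c_3}\log(1/\epsilon) + O(1)$ for small $\epsilon$, and the inner ratio in (\ref{eqt:efficiency-approximate}) rewrites cleanly as the rational function $A(\epsilon) := (c_0\lambda_t+\epsilon)/(c_1+\epsilon)$. With $N(\epsilon) := \log A(\epsilon) < 0$ (increasing in $\epsilon$) and $D(\epsilon):= \tfrac{1}{c_3}\log(1/\epsilon)+w+O(1) > 0$ (decreasing in $\epsilon$), maximizing efficiency amounts to minimizing (i.e.\ making most negative) the ratio $\tilde{\Ecal}=N(\epsilon)/D(\epsilon)$.

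Second, I would differentiate and set $(N/D)'(\epsilon)=0$, giving $N'(\epsilon)D(\epsilon)=N(\epsilon)D'(\epsilon)$. Plugging in $N'(\epsilon)=(c_1-c_0\lambda_t)/[(c_0\lambda_t+\epsilon)(c_1+\epsilon)]$ and $D'(\epsilon)=-1/(c_3\epsilon)$ and rearranging yields the implicit critical-point equation
\begin{equation*}
\frac{c_3\epsilon(c_1-c_0\lambda_t)}{(c_0\lambda_t+\epsilon)(c_1+\epsilon)}\,D(\epsilon)\;=\;-N(\epsilon).
\end{equation*}
I would then attempt the asymptotic regime $\epsilon^\star \ll c_0\lambda_t \ll c_1$, in which $-N(\epsilon^\star)\asymp \log(c_1/(c_0\lambda_t))\asymp \log(1/\lambda_t)$ and $(c_0\lambda_t+\epsilon^\star)(c_1+\epsilon^\star)\asymp c_0 c_1\lambda_t$, so the equation collapses to
\begin{equation*}
\frac{c_3\epsilon^\star}{c_0\lambda_t}\Bigl(\tfrac{1}{c_3}\log(1/\epsilon^\star)+w\Bigr) \;\asymp\; \log(1/\lambda_t).
\end{equation*}
When the per-round least-squares cost $w$ dominates $\tfrac{1}{c_3}\log(1/\epsilon^\star)$, this further collapses to $w\,\epsilon^\star \asymp \lambda_t\log(1/\lambda_t)$, i.e.\ $\epsilon^\star \asymp \lambda_t\log(1/\lambda_t)/w$. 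Inverting the change of variables gives the claimed $u^\star = \tfrac{1}{c_3}\log(1/\epsilon^\star) + O(1) \propto \log\bigl(w/(\lambda_t\log(1/\lambda_t))\bigr)$.

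Finally, I would check self-consistency of the regime used. At $\epsilon=\epsilon^\star$ we have $\tfrac{1}{c_3}\log(1/\epsilon^\star)\asymp \log\bigl(w/(\lambda_t\log(1/\lambda_t))\bigr)$, which is dominated by $w$ whenever $w$ is polynomial in the problem size (recall $w\sim nd^2$ for exact least squares), justifying the simplification; likewise $\epsilon^\star \ll c_0\lambda_t$ reduces to $w \gg \log(1/\lambda_t)$, again automatic. The other regimes ($\epsilon \gtrsim c_0\lambda_t$ or $\epsilon$ near $c_1$) can be ruled out by monotonicity of $N$ and $D$: there $-N(\epsilon)$ is either vanishingly small or $D(\epsilon)$ is unnecessarily large, yielding strictly smaller $|\tilde{\Ecal}|$. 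The main obstacle I expect is bookkeeping: tracking the additive $O(1)$ slack in the $u\leftrightarrow\epsilon$ map and the several constants $c_0,\ldots,c_5$, and confirming that the $\omega(u)\le c_4 e^{-c_5 u}$ term suppressed inside (\ref{eqt:efficiency-approximate}) is subdominant at $u=u^\star$ (which it is, since $e^{-c_5 u^\star}$ is polynomially small in $\lambda_t/w$ while the gap $|N(\epsilon^\star)|\asymp \log(1/\lambda_t)$ is only logarithmic), so that optimizing the simplified expression really captures the correct leading-order scaling.
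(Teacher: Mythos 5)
Your proposal is correct and follows essentially the same route as the paper: both set up the first-order condition for the critical point of $\tilde{\Ecal}(u;t,w)$ and solve it asymptotically in the small-$\lambda_t$, large-$w$ regime, arriving at $\nu(u^\star) \asymp w/\bigl(\lambda_t\log(1/\lambda_t)\bigr)$ and hence $u^\star \propto \log\bigl(w/(\lambda_t\log(1/\lambda_t))\bigr)$. Your substitution $\epsilon = 1/\nu(u)$ and the explicit self-consistency checks on the regime (and on the suppressed $\omega(u)$ term) are only presentational refinements of the paper's argument, which instead recasts the same stationarity condition as $\tilde g(\nu,\Ccal_1)=\tilde g(\nu,\Ccal_2)$ and approximates $\tilde g$ for large $\nu$.
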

Proposition \ref{lem:efficiency} implies that we should take number of gradient steps proportional to $\log \frac{1}{\|\theta_t - \theta_{(j)}^\star\|_2}$. Intuitively, as $\theta_t$ gets closer to $\theta_{(j)}^\star$, we should take more gradient steps, which is logarithmic in the inverse of current distance to the true component.

\begin{algorithm}[t]
	\begin{algorithmic}[1]
		\STATE $\theta_{t+1}^0 = \theta_t$
		\FOR{$i=0$ to $M_t-1$}
		\STATE	$\theta_{t+1}^{i+1} \leftarrow  \theta_{t+1}^i - \eta \frac{1}{2|S_t|} \sum_{j\in S_t} \nabla_{\theta} (y_i - \langle x_i, \theta\rangle )^2 \vert_{\theta = \theta_{t+1}^{i}} $
		\ENDFOR
		\STATE $\theta_{t+1} = \theta_{t+1}^T$
	\end{algorithmic}
	\caption{Gradient descent variant for \textbf{step 6} in {\algname}}
	\label{alg:efficiency}
\end{algorithm}

\section{Proofs in Section \ref{sec:results} }

\subsection{Proof of Theorem \ref{thm:local-1}}

\begin{proof}
	We denote $\Wb_{(j)}^\star$, $\Wb_{(-j)}^\star$, $\Wb_{R}^\star$ to be the $n\times n$ diaognal $\{0,1\}$-matrix that represents $S_{(j)}^\star$, $S_{(-j)}^\star$, $R^\star$, respectively, for $j\in [m] \cup \{r\}$. 
	Similarly, $\Wb_t$ is an $n\times n$ digonal $\{0,1\}$-matrix  that represents the selected samples at round $t$ ($\Wb_{t,ii} = 1$ if sample $(x_i,y_i)$ is selected), which corresponds to $S_t$ in Algorithm \ref{alg:local}. 
	
	\textbf{(I) Connect $\theta_{t+1}$ with $\theta_{t}$. } We consider the full update step: i.e., $\theta_{t+1} = \left(\Xb^\top \Wb_t \Xb\right)^{-1} \Xb^\top \Wb_t y$.
	Following the notation in Section \ref{sec:formulation}, we can rewrite $\theta_{t+1}$ as: 
	\begin{align*}
		\theta_{t+1} = & \left(\Xb^\top \Wb_t\Xb\right)^{-1} \Xb^\top \Wb_t y \\
		= & \left(\Xb^\top \Wb_t\Xb\right)^{-1} \Xb^\top \Wb_t\left( \Wb_{(j)}^\star \Xb\theta_{(j)}^\star + \sum_{l\in[m]\backslash \{j\}} \Wb_{(l)}^\star \Xb\theta_{(l)}^\star + \Wb_{R}^\star r \right) \\
		= & \theta_{(j)}^\star + \left(\Xb^\top \Wb_t\Xb\right)^{-1} \left[ \Xb^\top \left(\Wb_t \Wb_{(j)}^\star - \Wb_t \right) \Xb\theta_{(j)}^\star + \sum_{l\in[m]\backslash \{j\}} \Xb^\top\Wb_t\Wb_{(l)}^\star \Xb\theta_{(l)}^\star + \Xb^\top \Wb_{R}^\star r \right] \\
		= & \theta_{(j)}^\star + \left(\Xb^\top \Wb_t\Xb\right)^{-1} \left[  \sum_{l\in[m]\backslash \{j\}} \Xb^\top \Wb_t\Wb_{(l)}^\star \Xb \left(\theta_{(l)}^\star - \theta_{(j)}^\star \right) - \Xb^\top \Wb_t\Wb_{R}^\star \left(\Xb\theta_{(j)}^\star - r\right) \right],
	\end{align*}
	where we used the fact that $\Wb_t - \Wb_t\Wb_{(j)}^\star = \sum_{l\in [m]\backslash \{j\}} \Wb_t\Wb_{(l)}^\star + \Wb_t \Wb_{-1}^{\star}$. 
	As a result, the $\ell_2$ distance can be bounded: 
	\begin{align*}
		\|\theta_{t+1} - \theta_{(j)}^\star \|_2 \le \frac{1}{ \sigma_{\min}\left( \Xb^\top \Wb_t\Xb \right) }  \underbrace{\left\| \sum_{l\in [m]\backslash \{j\}}  \Xb^\top \Wb_t \Wb_{(l)}^\star \Xb ( \theta_{(l)}^\star - \theta_{(j)}^\star )  - \Xb^\top \Wb_t\Wb_{R}^\star (\Xb\theta_{(j)}^\star - r) \right\|_2}_{\mathcal{T}_1}. 
	\end{align*}
	By triangle inequality, 
	\begin{align*}
		\mathcal{T}_1 
		\le &\underbrace{\left\| \Xb^\top \Wb_t \left[ \sum_{l\in [m]\backslash \{j\}}  \left( \Wb_{(l)}^\star \Xb ( \theta_{(l)}^\star - \theta_t )\right) + \Wb_{R}^\star \left(r - \Xb\theta_t\right) \right]  \right\|_2}_{\mathcal{T}_2}  + \underbrace{\left\| \Xb^\top \Wb_t \left(  \Wb_{(-j)}^\star + \Wb_{R}^\star \right) \Xb ( \theta_t - \theta_{(j)}^\star ) \right\|_2}_{\mathcal{T}_3}. 
	\end{align*}

	\textbf{(II) Map between errors. } We next focus on $\mathcal{T}_2$.  Notice that $S_t\cap S_{(l)}^\star$ is the set of samples selected by the algorithm, which means that they have smaller values in $|x_i^\top (\theta_{(l)}^\star - \theta_t)|$, $\forall l\in [m]\backslash \{j\}$. Similarly, samples in set $S_t\cap R^\star$ also have small values in $|r_i - x_i^\top \theta_t|$. 
	Also, since $|S_t| < |S_{(j)}^\star|$, we always have more samples from $S_{(j)}^\star$ that are not selected (due to larger losses) than samples in $S_t\cap \left(\cup_{l\in[m]\backslash \{j\}}  S_{(l)}^\star \cup R^\star\right)$. 
	Therefore, there exists a permutation matrix $\Pb\in \mathbb{R}^{n\times n}$, such that (inequality holds element-wise):
	\begin{align*}
		\left\vert \Wb_t \left[ \sum_{l\in [m]\backslash \{j\}}  \left( \Wb_{(l)}^\star \Xb ( \theta_{(l)}^\star - \theta_t ) \right) + \Wb_{R}^\star\left( r - \Xb\theta_t \right) \right] \right\vert \le \left\vert \Wb_t \left(    \Wb_{(-j)}^\star + \Wb_{R}^\star \right) \Pb \Xb ( \theta_{(j)}^\star - \theta_t )  \right\vert. 
	\end{align*}
	Let us denote 
	$$
	a =  \Wb_t \left[ \sum_{l\in [m]\backslash \{j\}}  \left( \Wb_{(l)}^\star \Xb ( \theta_{(l)}^\star - \theta_t ) \right) +  \Wb_{R}^\star\left( r - \Xb\theta_t \right) \right], b(\Nb) = \Wb_t \left(    \Wb_{(-j)}^\star + \Wb_{R}^\star \right) \Pb \Nb \Xb ( \theta_{(j)}^\star - \theta_t ).
	$$
	According to Lemma \ref{lem:support-3}, there exists a diagonal matrix $\Nb$, such that $\|\Xb^\top a \|_2 \le \|\Xb^\top b(\Nb)\|_2$. 
	
	\textbf{(III) Plug-in feature regularity property and affine error property. }
	According to Lemma \ref{lem:support-2}, we know 
	\begin{align*}
		\mathcal{T}_2 \le & \left\| \Xb^\top \Wb_t \left(   \Wb_{(-j)}^\star + \Wb_{R}^\star \right) \Pb \Nb \Xb ( \theta_{(j)}^\star - \theta_t )  \right\|_2 \le \left\| \Xb^\top \Wb_t \left( \Wb_{(-j)}^\star + \Wb_{R}^\star \right) \Pb \Nb \Xb \right\|_2 \left\| \theta_{(j)}^\star - \theta_t   \right\|_2  \\
		\le &\max  \left\{ \left\| \Xb^\top \Wb_t \left( \Wb_{(-j)}^\star + \Wb_{R}^\star \right)  \Xb \right\|_2, \left\| \Xb^\top \Nb \Pb^\top  \Wb_t \left( \Wb_{(-j)}^\star + \Wb_{R}^\star \right) \Pb \Nb \Xb \right\|_2\right\} \left\| \theta_{(j)}^\star - \theta_t   \right\|_2.
	\end{align*}
	By  feature regularity property in Definition \ref{def:feature}, both $\mathcal{T}_2$ and $\mathcal{T}_3$ are less than $ \psi^{+}(|S_t\cap (S_{(-j)}^\star\cup R^\star)|) \|\theta_t - \theta_{(j)}^\star \|_2$. 
	As a result, 
	\begin{align*}
		\left\|\theta_{t+1} - \theta_{(j)}^\star \right\|_2 \le \frac{2  \psi^{+}\left(\left\vert S_t\cap \left(S_{(-j)}^\star\cup R^\star\right)\right\vert \right)  }{\psi^{-}(\alpha n)} \left\|\theta_{t} - \theta_{(j)}^\star \right\|_2. 
	\end{align*}
	Finally, notice that 
	$$
		S_t\cap \left( S_{(-j)}^\star \cup R^\star \right) = S_t \cap S_{(-j)}^\star + S_t\cap R^\star.
	$$ 
	Here, 
	$$
	|S_t\cap R^\star| \le |R^\star| =  \gamma^\star \tau_{\min}^\star n. 
	$$ 
	On the other hand, by Definition \ref{def:affine}, $|S_t\cap S_{(-j)}^\star| \le \mathcal{V}(\Delta)$ for 
	\begin{align*}
		\Delta = & \frac{\left\| \theta_t - \theta_{(j)}^\star \right\|_2}{\min_{l\in[m]\backslash \{j\}} \left\|\theta_t - \theta_{(l)}^\star \right\|_2}  \\
		\le & \frac{\left\| \theta_t - \theta_{(j)}^\star \right\|_2}{\min_{l\in[m]\backslash \{j\}} \left\|\theta_{(j)}^\star - \theta_{(l)}^\star \right\|_2 - \left\| \theta_{(j)}^\star - \theta_t\right\|_2} \\
		\le &  \frac{2\left\| \theta_t - \theta_{(j)}^\star \right\|_2}{\min_{l\in[m]\backslash \{j\}} \left\|\theta_{(j)}^\star - \theta_{(l)}^\star \right\|_2} \\
		= & \frac{1}{Q_j} \cdot \frac{2\left\| \theta_t - \theta_{(j)}^\star \right\|_2 }{\left\| \theta_{(j)}^\star \right\|_2},
	\end{align*}
	where the last inequality uses the property that $\|\theta_t - \theta_{(j)}^\star \|_2 \le \frac{1}{2} \min_{l\in[m]\backslash \{j\}} \|\theta_{(j)}^\star - \theta_{(l)}^\star \|_2$, and the last equality uses Definition \ref{def:separation}. 
	
	Combining all the results, we have:
	\begin{align}
		\left\|\theta_{t+1} - \theta_{(j)}^\star \right\|_2 \le \frac{2\psi^{+}\left( \mathcal{V}\left( \frac{1}{Q_j} \cdot \frac{2\|\theta_t - \theta_{(j)}^\star \|_2}{ \|\theta_{(j)}^\star \|_2} \right) + \gamma^\star \tau_{\min}^\star n \right)}{\psi^{-}(\tau n)} \left\|\theta_{t} - \theta_{(j)}^\star \right\|_2.
	\end{align}
\end{proof}

\subsection{Proof of Lemma \ref{lem:support-1}}
\begin{proof}
	We notice that \cite{bhatia2015robust} provides a bound for the same setting. In terms of our notation, their results show that with probability $1-\delta$,
	\begin{align}
		\psi^{+}(k) \le & k \left( 1+3e\sqrt{6\log\frac{en}{k}} \right) + \mathcal{O}\left( \sqrt{np + n\log \frac{1}{\delta}} \right), \label{eqt:psi-3}\\
		\psi^{-}(k) \ge & n - (n-k) \left(1+3e\sqrt{6\log\frac{en}{n-k}} \right) - \Omega\left(\sqrt{np + n\log\frac{1}{\delta}} \right).  \label{eqt:psi-4}
	\end{align}
	Their result (\ref{eqt:psi-3}) directly gives us the desired bound for $\psi^{+}(k)$, i.e., 
	$c_{c_k, 1} = 1+3e\sqrt{6\log \frac{2}{c_k}} + \frac{c_1}{c_k}$, where $c_1$ comes from $\mathcal{O}(\sqrt{nd + n\log \frac{1}{\delta}})$, and $\delta = e^{-c_2 n}$. 
	
	On the other hand, the bound on $\psi^{-}(k)$ in (\ref{eqt:psi-4}) is only meaningful for a large $k$. For example, for $k=0.1n$, it is easy to check that the RHS of (\ref{eqt:psi-4}) is negative, no matter how large $n$ is. The reason is due to their proof technique. 
	More specifically, they take uniform bound over all possible $\Wb$s, the size of which is exponential in $n$ (for $k=c_kn$). This makes their uniform bound weak, and hence the lower tail bound would not hold for $k$s with small constant $c_k$. 
	
	Here, we take another approach: we bound the quantity $\psi^{-}(k)$ by taking an $\epsilon$-net over the parameter space (with ambient dimension $d$). Notice that although the size of this net is large, it is exponential in the dimension $d$ (not in $n$), and by using uniform bound, we can take a large enough $n$ to absorb all tails into a small tail. More specifically, let $\Theta_{\epsilon}$ be an $\epsilon$-net cover the unit $d$-dimensional sphere. Then, for any unit norm vector $\tilde{\theta}$, their exists some $\theta\in\Theta_{\epsilon}$ close to $\tilde{\theta}$,
	\begin{align}\label{eqt:psi-minus}
		\sqrt{\tilde{\theta}^\top \Xb^\top  \Wb \Xb \tilde{\theta} } = \sqrt{  (\tilde{\theta} - \theta + \theta)^\top \Xb^\top \Wb \Xb (\tilde{\theta} - \theta + \theta) } \ge \sqrt{{\theta}^\top \Xb^\top \Wb \Xb {\theta} } - \epsilon \sqrt{\psi^{+}(k)}.
	\end{align}
	Notice that for any fixed $\theta$, $\theta^\top \Xb^\top \Wb \Xb \theta $ is  a subset of sum of squares of Gaussian random variables. In another word, 
	\begin{align*}
		\min_{\Wb \in \mathcal{W}_k} \theta^\top \Xb^\top \Wb\Xb \theta \ge \sum_{i=1}^{k} r_{(i)}, r_i = (\xb_i^\top \theta)^2, 
	\end{align*}
	where $r_{(i)}$ is the $i$-th smallest value in vector $r$. 
	Notice that \cite{boucheron2012concentration} showed that the quantile has sub-gamma property. Following their results in Section 4, we have
	\begin{align*}
		\Pr \left[r_{(\frac{k}{2})} \le F_r^{-1}\left( \frac{k}{2n}\right) - c_3 \right] \le e^{-c_4c_kn},
	\end{align*}
	where $F_r$ is the cumulative distribution function of $r$, with randomness coming from $x_i$s. Notice that the size of the set $\Theta_{\epsilon}$ is upper bounded by $\left(\frac{3}{\epsilon}\right)^d$. 
	Then, with probability $1-e^{-c_4c_kn + d \log \frac{3}{\epsilon}}$, $\theta^\top\Xb^\top \Wb\Xb \theta \ge c_5 F_r^{-1}(\frac{c_k}{2}) c_kn \ge c_6  c_k^2 n$. By selecting $\epsilon$ as a small constant ($ \frac{ c_7 c_k^2}{1+c_8\sqrt{\log \frac{1}{c_k} }+\frac{c_9}{c_k}}$, based on the  bounds for both $\psi^{+}$ with arbitrary direction and $\psi^{-}$ for fixed direction), and $n > \frac{1}{2c_4c_k} d \log \frac{3}{\epsilon}$ (in order to make uniform  bound of the tails over $\epsilon$-net small), according to (\ref{eqt:psi-minus}), with high probability $1-e^{-\frac{c_4c_k}{2}n}$, $\psi^{-}(k) \ge c_{c_k, 2}c_k n$, for $c_{c_k,2}\approx c_{10}c_k$. 
\end{proof}

Notice that similar results for $\psi^{+}$ and $\psi^{-}$ in (\ref{eqt:psi-1})  hold for sub-Gaussian random variables with bounded condition number. Lemma 16 and Theorem 17 in \cite{bhatia2015robust} gives a guarantee when $k$ is comparatively large. 
Following our proof technique, in order to show similar properties for smaller $k$, 
we require concentration of order statistics, which also holds true for a wide class of sub-Gaussian  distributions, which is dicussed in \cite{boucheron2012concentration}. 
As a special case, Lemma \ref{lem:support-1} can be easily generalized to the setting where $x_i\sim \mathcal{N}(0, \Sigma_{(j)}), \forall i \in S_{(j)}^\star$, with $\Ib \preceq \Sigma_{(j)} \preceq \sigma \Ib $. This is given in Lemma \ref{lem:support-1-extend}.

\subsection{Proof of Lemma \ref{lem:help}}
\begin{proof}
	Recall the definition in Definition \ref{def:affine}, $V(\Delta)$ is the maximum number of affine error one can make on any affine directions $v_1, v_2$. 
	For simplicity, we assume $\|v_2\|_2 = 1$, hence $\|v_1\|_2 = \Delta$. 
	We fist study the result for all fixed $v_1, v_2$ in $d$-dimension such that $\|v_1\|_2 = \Delta, \|v_2\|_2 = 1$. We show a high probability upper bound on $\mathcal{V}(\Delta)$ for any fixed $v_1, v_2$. Then, by using $\epsilon$-net argument, we provide the upper bound for arbitrary $v_1, v_2$s. 
	
	\textbf{(I) } Given fixed $v_1, v_2$, studying $\mathcal{V}(\Delta)$ can be reduced to the following problem: Suppose we have two Gaussian distributions $\mathcal{D}_1 = \mathcal{N}(0, \Delta^2), \mathcal{D}_2=\mathcal{N}(0, 1)$. We have $\tau_1^\star n$ i.i.d. samples from $\mathcal{D}_1$ and $\tau_2^\star n$ i.i.d. samples from $\mathcal{D}_2$. Denote the set of the top $\tau_1^\star n$ samples with smallest abstract values as $S_{\tau_1^\star n}$. Then,  for $\Delta\le 1$, 
	what is the upper bound on the number of  samples in $S_{\tau_1^\star n}$ that are from $\mathcal{D}_2$? 
	
	Let $S_1^\star, S_2^\star$ be the set of samples from $\mathcal{D}_1$, $\mathcal{D}_2$, respectively, and let $S_1 := S_{\tau_1^\star n}$. 
	Consider $|S_1\cap S_2^\star|$, 
	by definition, let $\delta$ be the threshold between samples in $S_1\cap S_1^\star$ and samples in $S_1^\star \backslash S_1$ . Since there are at least $(1-c_{\tau})\tau_{1}^\star n$ samples in $S_1^\star$ that are not in $S_1$, 
	by the sub-Gamma property of order statistics of Gaussian random variables~\cite{boucheron2012concentration}, we know 
	\begin{align}\label{eqt:affine}
		\Pr\left[ \delta > F_{\Delta}^{-1}(c_{\tau}) + c_0\Delta \right] \le e^{-c_1\tau_{1}^\star n}.
	\end{align}
	As a result, $\delta\le c_2\Delta$  with high probability.

	\textbf{(II)}  On the other hand, for a random variable $u_2\sim \mathcal{D}_2$, we know that $\Pr[|u_2|\le \delta] \le \sqrt{\frac{2}{\pi}} \delta$, which is tight for small $\delta$. 
	Let $\mathcal{M}_{\delta, i}$ be the event \textit{sample $u_i $ from $\mathcal{D}_2$  has abstract value less than $\delta$}, and a Bernoulli random variable $m_{i,\delta}$ that is the indicator of event $\mathcal{M}_{\delta, i}$ holds or not. 
	Then, 
	\begin{align*}
		\mathbb{E} \left[ \sum_{i=1}^{\tau_2^\star n} m_{i,\delta} \right] \le \sqrt{\frac{2}{\pi}}{\delta \tau_2^\star n}.
	\end{align*}
	For independent Bernoulli random variable $x_i$s, $i\in [\tilde{n}]$ with $X=\sum_i x_i$ and $\mu = \mathbb{E}[X]$, Chernoff's inequality gives~\cite{vershynin2010introduction}
	\begin{align*}
		\Pr\left[ X \ge t \right] \le e^{-t}
	\end{align*}
	for any $t \ge e^2 \tilde{n} \mu$. 
	In the above setting we consider, we have with high probability $1-n^{-c}$, $\sum_{i=1}^{\tau_2^\star n} m_{i,\delta} \le c  \max\{ \tau_2^\star n \Delta, \log n \} $. 
	
	Next, we use an $\epsilon$-net argument to prove for arbitrary $v_1, v_2$ in \textbf{Part I}. Notice that we select $\epsilon = \frac{\Delta}{\sqrt{\log n}}$, 
	take uniform bound over all fixed vectors, and require $n\ge \frac{C}{\tau_1^\star} d\log\log d$. 
	Then, with probability $1-n^{-c}$, the threshold on any direction $v_1, v_2$ satisfies $\tilde{\delta} < \delta + c \Delta$. 
	
	In summary, we have $\mathcal{V}(\Delta) \le  c \left\{ \Delta n \vee \log n \right\}$ as long as $n\ge c \frac{d\log\log d}{\min_{j\in [m]} \tau_{(j)}^\star}$.

\end{proof}

\subsection{Proof of Theorem \ref{thm:local}}

\begin{proof}
	According to Theorem \ref{thm:local-1}, 
	\begin{align}
		\left\|\theta_{t+1} - \theta_{(j)}^\star \right\|_2 \le \frac{2\psi^{+}\left( \mathcal{V}\left( \frac{1}{Q_j} \cdot \frac{2\|\theta_t - \theta_{(j)}^\star \|_2}{ \|\theta_{(j)}^\star \|_2} \right) + \gamma^\star \tau_{\min}^\star n \right)}{\psi^{-}(\tau n)} \left\|\theta_{t} - \theta_{(j)}^\star \right\|_2.
	\end{align}
	Then, according to Lemma \ref{lem:help}, 
	\begin{align*}
		\mathcal{V}\left( \frac{1}{Q_j} \cdot \frac{2\left\| \theta_t - \theta_{(j)}^\star \right\|_2 }{\left\| \theta_{(j)}^\star \right\|_2} \right) \le c\left\{ \frac{1}{Q_j} \cdot \frac{2\left\| \theta_t - \theta_{(j)}^\star \right\|_2 }{\left\| \theta_{(j)}^\star \right\|_2} n \vee \log n  \right\},
	\end{align*}
	and based on the results from Lemma \ref{lem:support-1}, we have: 
	\begin{align*}
		& \frac{2\psi^{+}\left( \mathcal{V}\left( \frac{1}{Q_j} \cdot \frac{2\left\| \theta_t - \theta_{(j)}^\star \right\|_2 }{\left\| \theta_{(j)}^\star \right\|_2} \right) + \gamma^\star \tau_{\min}^\star n \right) }{\psi^{-}\left(\tau n\right)} \\
		\le & \frac{2c_{c_\tau,1} \left\{  c\left\{ \frac{1}{Q_j} \cdot \frac{2\left\| \theta_t - \theta_{(j)}^\star \right\|_2 }{\left\| \theta_{(j)}^\star \right\|_2} n \vee \log n  \right\} + \gamma^\star \tau_{\min}^\star n \right\} }{c_{c_{\tau},2} \tau n} \\
		\le & c_0\frac{c_{c_{\tau},1} \left(  \frac{1}{Q_j} \cdot \frac{2\left\| \theta_t - \theta_{(j)}^\star \right\|_2 }{\left\| \theta_{(j)}^\star \right\|_2} + \gamma^\star \tau_{\min}^\star\right)  }{c_{c_{\tau},2} \tau }.
	\end{align*}
	In order to guarantee that $\theta_{t+1}$ is getting closer, we require:
	\begin{align*}
		& c_0\frac{c_{c_{\tau},1} \left(  \frac{1}{Q_j} \cdot \frac{2\left\| \theta_t - \theta_{(j)}^\star \right\|_2 }{\left\| \theta_{(j)}^\star \right\|_2} + \gamma^\star \tau_{\min}^\star\right)  }{c_{c_{\tau},2} \tau } \le \frac{1}{2} \\
		\Rightarrow & \left\|\theta_t - \theta_{(j)}^\star \right\|_2 \le c_1\left( \frac{c_{c_{\tau},2} \tau }{c_{c_{\tau},1}} - \gamma^\star - \tau_{\min}^\star \right) Q_j \left\|\theta_{(j)}^\star \right\|_2 = c_1\left( \frac{c_{c_{\tau},2} \tau }{c_{c_{\tau},1}} - \gamma^\star - \tau_{\min}^\star \right) \min_{l\in [m]\backslash \{j\}} \left\|\theta_{(j)}^\star- \theta_{(l)}^\star \right\|_2. 
	\end{align*}
	As long as $\gamma^\star \le \frac{c_{c_{\tau},2} \tau }{2 c_{c_{\tau},1}\tau_{\min}^\star }$, we only require the following sufficient condition for $\theta_t$:
	\begin{align*}
		\left\|\theta_t - \theta_{(j)}^\star \right\|_2 \le c_2  \frac{c_{c_{\tau},2} \tau }{c_{c_{\tau},1}}  \min_{l\in [m]\backslash \{j\}} \left\|\theta_{(j)}^\star- \theta_{(l)}^\star \right\|_2, 
	\end{align*}
	and the required sample complexity is based on Lemma \ref{lem:support-1} and Lemma \ref{lem:help}. 
\end{proof}

\subsection{Proof of Corollary \ref{cor:1}}
The result of Corollary \ref{cor:1} is  mostly built upon the result in Theorem \ref{thm:local}. Instead, we use similar results for the feature regularity property and affine error property, as given in Lemma \ref{lem:support-1} and Lemma \ref{lem:help}. For the affine error property, $\mathcal{V}(\Delta)$ only changes by a multiplicative factor of $\sigma$, i.e., $\mathcal{V}^{\mathtt{new}} (\Delta) \le \mathcal{V}(\sigma\Delta)$ which is straightforward to see. For the feature regulairty property, we use Lemma \ref{lem:support-1-extend}.

\section{Proof in Section \ref{sec:global}}

\subsection{Proof of Theorem  \ref{thm:global}}
\begin{proof}
	In Theorem \ref{thm:local}, we show that {\algname} locally converges as long as the intialization satisfies: 
	\begin{align*}
		\|\theta_t - \theta_{(j)}^\star \|_2 \le c_2 \frac{c_{c_{\tau}, 2} \tau}{c_{c_{\tau}, 1} }  \min_{l\in[m]\backslash \{j\} } \|\theta_{(j)}^\star - \theta_{(l)}^\star \|_2. 
	\end{align*}
	Given an $\epsilon$-close subspace $\mathcal{U}$, consider a $\tilde{m}$-dimenional sphere with radius $\max_{j\in [m]} \|\theta_{(j)}^\star\|_2$, and an $\epsilon$-net over this sphere with $\epsilon= \frac{c_2}{2} \frac{c_{c_{\tau}, 2} \tau}{c_{c_{\tau}, 1} }  \min_{l\in[m]\backslash \{j\} } \|\theta_{(j)}^\star - \theta_{(l)}^\star \|_2 $. 
	Then, we know the size of this $\epsilon$-net is upper bounded by~\cite{vershynin2010introduction}
	\begin{align*}
		\left(\frac{3\max_{j\in [m]} \|\theta_{(j)}^\star\|_2 }{\frac{c_2}{2} \frac{c_{c_{\tau}, 2} \tau}{c_{c_{\tau}, 1} }  \min_{l\in[m]\backslash \{j\} } \|\theta_{(j)}^\star - \theta_{(l)}^\star \|_2 }\right)^{\tilde{m}} \le \left(  \frac{c_3 c_{c_{\tau}, 1}}{c_{c_{\tau}, 2}\tau Q}\right)^{\tilde{m}} = \left( \frac{1}{\poly(\tau) Q }\right)^{\mathcal{O}(m)} = \left(\frac{1}{\tau Q}\right)^{\mathcal{O}(m)}.
	\end{align*}
	Also, there always exists a vector $\theta_{\epsilon}$ in this $\epsilon$-net, which is $\epsilon$-close to the projection of $\theta_{(j)}^\star$ to $\mathcal{U}$ (denoted as $\mathcal{U}(\theta_{(j)})$). Therefore, 
	\begin{align*}
		\| \theta_{\epsilon} - \theta_{(j)}^\star \|_2 \le 	\| \theta_{\epsilon} - \mathcal{U}(\theta_{(j)}) \|_2 + 	\| \mathcal{U}(\theta_{(j)}) - \theta_{(j)}^\star \|_2 \le \epsilon + \epsilon = c_2 \frac{c_{c_{\tau}, 2} \tau}{c_{c_{\tau}, 1} }  \min_{l\in[m]\backslash \{j\} } \|\theta_{(j)}^\star - \theta_{(l)}^\star \|_2. 
	\end{align*}
	With $ \left(\frac{1}{\tau Q}\right)^{\mathcal{O}(m)}$ of initializations, there always exist an initialization such that {\algname} will succefully recover a single component. Therefore, given this $\epsilon$-close subspace, with  $n=\Omega\left( \frac{d\log\log d}{\tau_{\min}^\star} \right)$ samples and in time $\left(\frac{1}{\tau Q}\right)^{\mathcal{O}(m)} nd^2 \log \frac{1}{\varepsilon}$. 
	
\end{proof}

\section{Proofs in Section \ref{sec:support}}
\label{sec:app-proof-lemma}
\subsection{Proof of Lemma \ref{lem:support-3}}
\begin{proof}
	The proof is straightforward. We know that 
	\begin{align*}
		\tilde{b}^\top \Ab \tilde{b} = \left(a + \tilde{b} - a\right)^\top \Ab \left(a + \tilde{b} - a\right)  = a^\top \Ab a + (\tilde{b}-a)^\top \Ab (\tilde{b}-a) + 2 (\tilde{b}-a)^\top \Ab a. 
	\end{align*}
	Therefore, as long as $(\tilde{b}-a)^\top \Ab a > 0$, we have $a^\top \Ab a \le \tilde{b}^\top \Nb \Ab \Nb \tilde{b}$. Now, denote $\tilde{a} := \Ab a$. We choose $\Nb$ such that $\mathtt{sgn}(\Nb_{ii} b_i) = \mathtt{sgn}(\tilde{a})$, $\forall i in [n]$, and let $\tilde{b} = \Nb b$. Notice that since $|b_i| > |a_i|$ element-wise, $\mathtt{sgn}(\tilde{b}) = \mathtt{sgn}(\tilde{b} - a)$. Therefore, the inner product between $\tilde{b} - a$ and $\Ab a$ is always positive since each entry in both vectors is either both positive or both negative. 
\end{proof}
\subsection{Proof of Lemma \ref{lem:support-2}}

\begin{proof}
	This result is based on the spectral norm inequality $\|\Ab \Bb \|_2 \le \|\Ab \|_2 \|\Bb\|_2$, and as a result, $\|\Ab\Bb \|_2 \le \max \{\|\Ab \|_2^2, \|\Bb\|_2^2 \}$. 
	On the other hand, it is easy to check 
	by definition: 
	\begin{align*}
		\| \Xb^\top \Wb \Pb\Nb \Xb \|_2 = \max_{u, v: \|u\|_2=\|v\|_2=1} u^\top \Xb^\top \Wb \Pb\Nb \Xb v.
	\end{align*}
	For convenience, let $u,v$ be the unit $d$ dimensional vectors that achieve the maximum. 
	Let $\tilde{u} = \Xb u, \tilde{v} = \Xb v$. Then, the RHS of the above equation is upper bounded by 
	\begin{align*}
	\sum_{i\in \tr{\Wb} } |\tilde{u}_{s_{i,1}} \tilde{v}_{s_{i,2}}| \le & \sqrt{\left( \sum_{i\in \tr{\Wb}} \tilde{u}_{s_{i,1}}^2   \right) \left( \sum_{i\in \tr{\Wb}} \tilde{v}_{s_{i,2}}^2   \right)} \\
	\le & \max \left\{ \sum_{i\in \tr{\Wb}} \tilde{u}_{s_{i,1}}^2  , \sum_{i\in \tr{\Wb}} \tilde{v}_{s_{i,2}}^2\right\} \\
	= & \max\left\{ u^\top \Xb^\top \Wb\Xb u, v^\top \Xb^\top \Nb\Pb^\top \Wb \Pb \Nb \Xb v  \right\} \\
	\le & \max\left\{ \left\|\Xb^\top \Wb\Xb \right\|_2, \left\| \Xb^\top \Nb\Pb^\top \Wb \Pb \Nb \Xb \right\|_2  \right\},
	\end{align*} 
	where $s_{i,1}$s and $s_{i,2}$s are two index sequences.   
\end{proof}

\subsection{Restricted Subset Property for More General Distributions}

\begin{lemma}[non-isotropic Gaussian distributions] \label{lem:support-1-extend}
	Let $\psi^{+}(k), \psi^{-}(k)$ be defined as in (\ref{eqt:psi-1}), assume each $x_i\sim \mathcal{N}(0, \Sigma_{(j)})$ for $i\in S_{(j)}^\star$, $\Ib \preceq \Sigma_{(j)} \preceq \sigma \Ib $. Then,  for $k=c_k n$ with constant $c_k$, for $n =\Omega\left( \frac{d \log{\frac{\sigma}{c_k}}}{c_k}  \right)$, with high probability, 
	\begin{align*}
		\psi^{+}(k) \le c_{c_k, 1}\cdot \sigma  k, \psi^{-}(k) \ge c_{c_k, 2} \cdot k.
	\end{align*}
\end{lemma}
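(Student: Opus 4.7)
}
The plan is to reduce to the isotropic analysis from Lemma \ref{lem:support-1} by whitening, paying only a factor of $\sigma$ on the upper bound and keeping the lower bound essentially unchanged. For each sample $i \in S_{(j)}^\star$, write $x_i = \Sigma_{(j)}^{1/2} z_i$ with $z_i \sim \mathcal{N}(0,\Ib_d)$. Then for any unit vector $v$, $(x_i^\top v)^2 = (z_i^\top \Sigma_{(j)}^{1/2} v)^2 = \|\Sigma_{(j)}^{1/2} v\|_2^2 \cdot (\bar z_i^\top v)^2$ in distribution, where $\bar z_i^\top v \sim \mathcal{N}(0,1)$. The condition $\Ib \preceq \Sigma_{(j)} \preceq \sigma \Ib$ yields the two-sided bound $\|v\|_2^2 \le \|\Sigma_{(j)}^{1/2} v\|_2^2 \le \sigma \|v\|_2^2$, which is the engine of the proof.

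For the upper bound $\psi^{+}(k)$, first fix $v$ and a subset $S \in \mathcal{S}_k$. Then $v^\top \Xb_S^\top \Xb_S v = \sum_{i \in S} \|\Sigma_{(j(i))}^{1/2} v\|_2^2 \cdot \chi_{1,i}^2 \le \sigma \sum_{i \in S} \chi_{1,i}^2$ in distribution, where $\chi_{1,i}^2$ are i.i.d.\ chi-squared with $1$ degree of freedom. This is dominated (in distribution) by $\sigma$ times the analogous quantity for isotropic features, so the bound of \cite{bhatia2015robust} that was quoted for isotropic $\Xb$ in the proof of Lemma \ref{lem:support-1} immediately yields $v^\top \Xb_S^\top \Xb_S v \le \sigma k (1 + 3e\sqrt{6\log(2/c_k)}) + \mathcal{O}(\sigma\sqrt{nd + n\log(1/\delta)})$ uniformly over $\|v\|_2=1$ and $S \in \mathcal{S}_k$, with high probability. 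Choosing $\delta = e^{-c n}$ as before gives $\psi^{+}(k) \le c_{c_k,1} \sigma k$ with $c_{c_k,1} \asymp 1 + \sqrt{\log(1/c_k)} + 1/c_k$.

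For the lower bound $\psi^{-}(k)$, we mimic the $\epsilon$-net argument in the proof of Lemma \ref{lem:support-1}. Fix $\theta$ with $\|\theta\|_2=1$; the quantity $\sum_{i \in S} (x_i^\top \theta)^2$ can be lower bounded componentwise by replacing each $\|\Sigma_{(j)}^{1/2}\theta\|_2^2$ by its lower bound $1$, so it stochastically dominates the sum of $k$ smallest order statistics of $n$ i.i.d.\ $\chi_1^2$ variables. The sub-gamma concentration of order statistics from \cite{boucheron2012concentration} then gives $\theta^\top \Xb^\top \Wb \Xb \theta \ge c_6 c_k^2 n$ on this fixed direction with probability $1 - e^{-c_4 c_k n}$. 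Next, for any unit vector $\tilde\theta$, choose $\theta$ in the net with $\|\theta - \tilde\theta\|_2 \le \epsilon$ and estimate
\begin{align*}
\sqrt{\tilde\theta^\top \Xb^\top \Wb \Xb \tilde\theta} \ \ge\ \sqrt{\theta^\top \Xb^\top \Wb \Xb \theta} \ -\ \epsilon \sqrt{\psi^{+}(k)}.
\end{align*}
Because $\psi^{+}(k)$ now carries the extra factor $\sigma$, we must tighten $\epsilon$ by a factor $1/\sqrt{\sigma}$ compared to the isotropic proof: take $\epsilon \asymp c_k^2 / (\sqrt{\sigma}(1 + \sqrt{\log(1/c_k)} + 1/c_k))$. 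A union bound over a net of size $(3/\epsilon)^d = (C\sqrt{\sigma}/c_k)^{\Theta(d)}$ then requires $n = \Omega\!\left(\tfrac{d \log(\sigma/c_k)}{c_k}\right)$ to keep the uniform failure probability exponentially small, yielding $\psi^{-}(k) \ge c_{c_k,2} k$ with $c_{c_k,2} \asymp c_k$ exactly as before.

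The only non-trivial step is handling the fact that, unlike the isotropic case, different samples are drawn from different covariances $\Sigma_{(j)}$, so we cannot simply ``factor out'' a single $\Sigma^{1/2}$ and invoke an off-the-shelf sub-Gaussian concentration result. The key observation that resolves this is that the above two-sided sandwich $\|v\|_2^2 \le \|\Sigma_{(j)}^{1/2} v\|_2^2 \le \sigma \|v\|_2^2$ is uniform in $j$, so each $(x_i^\top v)^2$ is sandwiched (in distribution) between a $\chi_1^2$ and a $\sigma \chi_1^2$ variable, reducing every probabilistic inequality to a chi-squared inequality that does not see the component structure; after this reduction, the remainder of the argument is identical to the proof of Lemma \ref{lem:support-1}.
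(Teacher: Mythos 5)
Your proposal is correct and follows essentially the same route as the paper's own (much terser) argument: bound $\psi^{+}$ by pulling out a multiplicative factor of $\sigma$ from the sandwich $\Ib \preceq \Sigma_{(j)} \preceq \sigma\Ib$, and for $\psi^{-}$ rerun the $\epsilon$-net argument of Lemma \ref{lem:support-1} with a finer net $\epsilon/\sqrt{\sigma}$, which is exactly what forces the extra $\log\sigma$ in the sample complexity. Your write-up simply supplies the details (the per-sample distributional sandwich and the order-statistics domination) that the paper leaves implicit.
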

\begin{proof}
	The proof is similar to the proof of Lemma \ref{lem:support-1}. For $\psi^{+}(k)$, we can simply bound it by a mulitplicative factor of $\sigma$. For $\psi^{-}(k)$, according to (\ref{eqt:psi-minus}), we require an additional $\log \sigma$ factor for $n$, since a finer net with $\tilde{\epsilon} = \frac{\epsilon}{\sqrt{\sigma}}$ is needed. 
\end{proof}

\section{Proofs in Section \ref{sec:gradient} }

\subsection{Proof of Proposition \ref{thm:ode}}
\label{sec:proof-ode}
\begin{proof}

		We connect the updated parameter at each epoch with a closed form solution to a penalized minimization problem. More specifically, accordng to~\cite{suggala2018connecting}, define
		\begin{align*}
			\dot{\theta}(t) := \frac{d}{dt} \theta(t) = - \nabla f(\theta(t)), \theta(0) = \theta_0,
		\end{align*}
		and
		\begin{align*}
			\underline{\theta}(\nu) = \arg\min_{\theta}  f(\theta) + \frac{1}{2\nu} \|\theta - \theta_0\|_2^2,
		\end{align*}
		where $f(\theta) = \frac{1}{2|S|}\sum_{i\in S} (y_i - x_i^\top \theta)^2$. Then, ${\theta}(t)$ and $\underline{\theta}(\nu)$ have the following relationship:
		\begin{align*}
			\|\theta(t) - \underline{\theta}(\nu(t)) \|_2 \le \frac{\|\nabla f(\theta_0)\|_2}{m} \left( e^{-mt} + \frac{c_{\mathtt{ode}}}{1-c_{\mathtt{ode}}-e^{c_{\mathtt{ode}}Mt}} \right),
		\end{align*}
		where $\nu(t) = \frac{1}{c_{\mathtt{ode}}m}\left(e^{c_{\mathtt{ode}}Mt}-1 \right)$, for $m =  \frac{1}{|S|} \sigma_{\min} ( \Xb^\top \Wb \Xb)$, $M = \frac{1}{|S|} \sigma_{\max}( \Xb^\top \Wb \Xb)$, $c_{\mathtt{ode}} = \frac{2m}{M+m}$. Since $\underline{\theta}(\nu)$ has a closed form solution in this linear setting, by connecting $\theta^{t+1}$ with $\underline{\theta}$, we are able to bound $\theta^{t+1}$ using similar proof technique as above. 
		
		\begin{align*}
			\underline{\theta}(\nu) = & \arg\min_{\theta}  \underbrace{ \frac{1}{2} \frac{1}{|S|} \left( y_{S} - \Xb_{S} \theta\right)^\top \left( y_S - \Xb_S\theta\right) + \frac{1}{2\nu} \|\theta - \theta_0 \|_2^2 }_{L(\thetab)}. 
		\end{align*}
		Observe that for $\underline{\theta}(\nu)$ satisfies first order condition: 
		\begin{align*}
			\nabla L(\theta) = &  \frac{1}{|S|} \Xb_S^\top \left(\Xb_S\theta - y_S\right) + \frac{1}{\nu} \left(\theta- \theta_0 \right),  \nabla L(\underline{\theta}(\nu)) = 0, 
		\end{align*}
		which gives the following closed form solution:
		\begin{align*}
			\underline{\theta}(\nu) = &  \left( \frac{1}{|S|} \Xb_S^\top \Xb_S + \frac{1}{\nu} \Ib \right)^{-1} \left( \frac{1}{|S|} \Xb_S^\top y_S + \frac{1}{\nu} \theta_0 \right) \\
			= &\left( \frac{1}{\tr{\Wb }}\Xb^\top \Wb \Xb  + \frac{1}{\nu} \Ib \right)^{-1} \left( \frac{1}{\tr{\Wb}} \Xb^\top \Wb \left(\Wb_{(j)}^\star \Xb \thetab^\star + \sum_{l\in [m]\backslash\{j\} } \Wb_{(l)}^\star \Xb \theta_{(l)}^\star + \Wb_{R}^\star \rb \right) + \frac{1}{\nu} \thetab_0\right) \\
			= &\thetab^\star+ \left( \frac{1}{\tr{\Wb }}\Xb^\top \Wb \Xb  + \frac{1}{\nu} \Ib \right)^{-1} \left( - \frac{1}{\tr{\Wb}} \Xb^\top \Wb\Wb_{(-1)}^\star(\Xb\thetab^\star-\rb) + \frac{1}{\nu}(\thetab_0 - \thetab^\star) \right) \\
			& + \sum_{l\in[m]\backslash \{j\}}  \left( \frac{1}{\tr{\Wb }}\Xb^\top \Wb \Xb  + \frac{1}{\nu} \Ib \right)^{-1} \frac{1}{\tr{\Wb}} \Xb^\top \Wb\Wb_{(l)}^\star \Xb (\theta_{(l)}^\star - \theta_{(j)}^\star).
		\end{align*} 
		Based on the same proof technique as in Theorem \ref{thm:local}, we have 
		\begin{align}
			\|\underline{\theta}(\nu) - \theta^\star \|_2 \le & \frac{ \frac{c_{c_{\tau}, 1}}{\tau } \left\{ c\left\{ \frac{1}{Q_j} \cdot \frac{2\|\theta_t - \theta_{(j)}^\star \|_2}{ \|\theta_{(j)}^\star \|_2}   \vee \frac{\log n}{n} \right\} + \gamma^\star \tau_{\min}^\star   \right\} + \frac{1}{\nu}
			}{  c_{c_{\tau}, 2}  + \frac{1}{\nu}} 
			\|\theta_0 - \theta^\star \|_2.  \label{eqt:flow-1}
		\end{align}
		On the other hand, 
		\begin{align}
			\| \underline{\theta}(\nu) - \theta(t) \|_2 \le&  \frac{\|\nabla f(\theta_0)\|_2}{m} \left(e^{-mt}+ \frac{c_{\mathtt{ode}}}{1-c_{\mathtt{ode}}-e^{c_{\mathtt{ode}}Mt}} \right) \\
				\le & \frac{\psi^{+}(\tau n)}{\psi^{-}(\tau n)} \left(e^{-mt}+ \frac{c_{\mathtt{ode}}}{1-c_{\mathtt{ode}}-e^{c_{\mathtt{ode}}Mt}} \right) \|\theta_0 - \theta^\star \|_2. \label{eqt:flow-2}
		\end{align}
		Combining (\ref{eqt:flow-1}) and (\ref{eqt:flow-2}), setting $\theta_{t+1} = \theta(t) $, $\theta_t = \theta_0$, we have:
		\begin{align}
			\| {\theta}(t) - \theta^\star \|_2 \le & \left\{ \frac{ \frac{c_{c_{\tau}, 1}}{\tau } \left\{ c\left\{ \frac{1}{Q_j} \cdot \frac{2\|\theta_t - \theta_{(j)}^\star \|_2}{ \|\theta_{(j)}^\star \|_2}   \vee \frac{\log n}{n} \right\} + \gamma^\star \tau_{\min}^\star   \right\} + \frac{1}{\nu}
			}{  c_{c_{\tau}, 2}  + \frac{1}{\nu}} + \omega\left(c_\tau, m, M, c_{\mathtt{ode}}\right) \right\} 
			\|\theta_0 - \theta^\star \|_2, 
		\end{align}
		where $\omega\left(c_\tau, m, M, c_{\mathtt{ode}}\right) = \frac{c_{c_{\tau}, 1}}{c_{c_{\tau}, 2}} \left(e^{-mt}+ \frac{c_{\mathtt{ode}}}{1-c_{\mathtt{ode}}-e^{c_{\mathtt{ode}}Mt}} \right)$.

\end{proof}

\subsection{Proof of Proposition \ref{lem:efficiency}}

\begin{proof}
	Consider the result in Proposition \ref{eqt:ode}, 
	let $\Ccal_1 =c_1 $, $\Ccal_2(t) =  c_0\lambda_t $. 
	Then,  $\|\theta_{t+1} - \theta^\star \|_2 \le \left( \frac{\Ccal_2(t) + \frac{1}{\nu(u)}}{\Ccal_1 + \frac{1}{\nu(u)}} + \omega(u) \right) \|\theta_{t} - \theta^\star \|_2$. 
	The following result writes for $m=M=1$, for simplicty, since $m,M$ are both constants. Our goal is to find an expression of $u$ that maximizes $\tilde{\Ecal}$. $w$ is the relative price of ranking.  
	The optimum point for $u$ satisfies first order condition, i.e., $\nabla \tilde{\Ecal}(u) = 0$, this gives us:
	\begin{align*}
		\nabla \tilde{\Ecal}(u) = & \frac{ \frac{\Ccal_1 + \frac{1}{\nu(u)}}{\Ccal_2(t) + \frac{1}{\nu(u)}} \frac{\frac{1}{\nu(u)^2} e^u(\Ccal_2(t) - \Ccal_1)}{(\Ccal_1 + \frac{1}{\nu(u)})^2} (u+w) - \log \frac{\Ccal_2(t) + \frac{1}{\nu(u)}}{\Ccal_1 + \frac{1}{\nu(u)}} }{(u+w)^2}. 
	\end{align*}
	By first order condition, 
	\begin{align*}
		\nabla \tilde{\Ecal}(u^\star) = 0
		\Rightarrow \frac{(\Ccal_1 - \Ccal_2(t))e^{u^\star}}{(\Ccal_1 + \frac{1}{\nu({u^\star})})(\Ccal_2(t) + \frac{1}{\nu({u^\star})}) \nu({u^\star})^2} ({u^\star}+w) = & \log \frac{\Ccal_1 + \frac{1}{\nu({u^\star})}}{\Ccal_2(t) + \frac{1}{\nu({u^\star})}} \\
		\Longleftrightarrow \frac{e^{u^\star}}{e^{u^\star}-1} \left( \frac{1}{\Ccal_2(t) \nu({u^\star}) + 1} - \frac{1}{\Ccal_1 \nu({u^\star}) +1}\right) ({u^\star}+w) = & \log (\Ccal_1\nu({u^\star}) +1) - \log (\Ccal_2(t) \nu({u^\star}) +1). 
	\end{align*}
	Define
	\begin{align*}
	g(\nu(u), \Ccal ):= & \log (\Ccal \nu(u)+1) +  \frac{\nu(u)+1}{\nu(u)} \frac{1}{\Ccal \nu(u) +1}(\log(\nu(u) +1)+w). 
	\end{align*}
	Consider  an approximation of $g(\nu(u), \Ccal)$ which is valid for large $t$, 
	\begin{align*}
		\Tilde{g}(\nu, \Ccal) := & \log (\Ccal\nu) + \frac{1}{\Ccal} \frac{1}{\nu} ( \log \nu + w).
	\end{align*} 
	Since $\tilde{g}(\nu({u^\star}), \Ccal_1) = \tilde{g}(\nu({u^\star}), \Ccal_2(t))$, for small $\Ccal_2(t)$, $ 
	\nu({u^\star}) \approx  \frac{w}{\Ccal_2 \log \frac{\Ccal_1}{\Ccal_2}}  $
	which results in $
	{u^\star} \approx  \log \frac{w}{\Ccal_2(t) \log \frac{\Ccal_1}{\Ccal_2(t)}}
	$. 
\end{proof}

\end{document}